\documentclass{article}
\usepackage{iclr2027_conference,times}

\usepackage{amsmath,amsfonts,bm}

\def\eqref#1{equation~\ref{#1}}

\def\1{\bm{1}}

\DeclareMathAlphabet{\mathsfit}{\encodingdefault}{\sfdefault}{m}{sl}
\SetMathAlphabet{\mathsfit}{bold}{\encodingdefault}{\sfdefault}{bx}{n}

\usepackage{xcolor}
\usepackage[colorlinks=true,linkcolor=red!70!black,citecolor=blue!70!black,urlcolor=blue!70!black]{hyperref}
\usepackage{url}
\usepackage{booktabs}
\usepackage{graphicx}
\usepackage{amsmath,amssymb,amsthm}
\usepackage{multirow}
\usepackage{subcaption}
\usepackage{microtype}
\usepackage{algorithm}
\usepackage{algpseudocode}
\usepackage{enumitem}
\setlist{nosep, leftmargin=1.5em}

\iclrfinalcopy

\newtheorem{theorem}{Theorem}

\title{The Alignment Paradox: How Post-Training Amplifies Confident Hallucinations in Language Models}

\author{
  \normalfont
  \textbf{Qingjia Huang}$^{1,2,*}$, \quad
  \textbf{Yakai Li}$^{1,2,*}$, \quad
  \textbf{Jianguo Wu}$^{1,2}$, \quad
  \textbf{Qihang Zhou}$^{1,2}$, \\
  \textbf{Aimin Yu}$^{1,2}$, \quad
  \textbf{Xiaoqi Jia}$^{1,2}$, \quad
  \textbf{Luping Ma}$^{1,2}$, \quad
  \textbf{Weijuan Zhang}$^{1,2,\dagger}$ \\[6pt]
  $^1$Institute of Information Engineering, Chinese Academy of Sciences, Beijing, China \\
  $^2$School of Cyber Security, University of Chinese Academy of Sciences, Beijing, China
}

\begin{document}

\maketitle

\begingroup
\renewcommand\thefootnote{}
\footnotetext{$^*$Equal contribution. $^\dagger$Corresponding author: \texttt{zhangweijuan@iie.ac.cn}.}
\endgroup

\begin{abstract}
Large language models (LLMs) can produce factually incorrect answers with high confidence, undermining their reliability and limiting the effectiveness of uncertainty-based error detection. While prior research attributes confident hallucinations to factors such as missing knowledge in training data, reasoning errors, or stochastic decoding, we uncover that post-training alignment itself is a primary driver of these errors, a phenomenon we call the \textbf{Alignment Paradox}. Across five model families evaluated on factual benchmarks, unaligned base models produce few high-confidence errors on long-tail factual queries, whereas instruction-tuned models multiply high-confidence errors ($p \ge 0.95$) by more than an order of magnitude (10$\times$ to 35$\times$). Layer-wise probing with the Logit Lens reveals that this overconfidence emerges in late layers, where wrong-answer margins expand past 4.0 points after remaining near zero across early and intermediate layers. These findings motivate limiting margin growth during post-training. We implement this principle through an entropy-dependent margin bound in direct preference optimization (DPO). In multi-epoch experiments with Mistral-7B, the bounded objective reduces high-confidence errors by up to 35.3\% relative to standard DPO while maintaining performance on evaluated general reasoning benchmarks. These results show that bounded margins mitigate confident hallucinations during post-training.
\end{abstract}

\section{Introduction}
\label{sec:intro}

When large language models produce errors with low confidence ($p \approx 0.3\text{--}0.5$), uncertainty-based error detection and selective prediction \citep{geifman2017selective} can reliably flag or defer the query. A critical vulnerability arises when models produce incorrect assertions with high confidence ($p \ge 0.95$). Such confident hallucinations evade threshold-based abstention, bypass automated verification, and undermine decision systems where abstention is preferable to erroneous completions \citep{amodei2016concrete,kadavath2022language}.

While prior research attributes factual hallucinations to pre-training knowledge gaps, multi-step reasoning errors, or decoding noise \citep{kadavath2022language,dziri2023faith,chuang2024dola,orgad2025llms}, post-training alignment, comprising supervised fine-tuning (SFT; \citealt{ouyang2022training}) and preference optimization \citep{bai2022training,rafailov2023direct}, is widely assumed to improve truthfulness and safety \citep{touvron2023llama,dubey2024llama,qwen2024technical}. In this work, we uncover that post-training alignment itself is a primary driver of high-confidence errors, a phenomenon we term the \textbf{Alignment Paradox}\footnote{Code and evaluation data are accessible at \url{https://github.com/star5o/HCE}.}. Although alignment enhances conversational adherence, it degrades confidence reliability on factual queries, shifting incorrect predictions into extreme confidence regimes ($p \ge 0.95$).

\begin{figure}[t]
\centering
\includegraphics[width=0.88\textwidth]{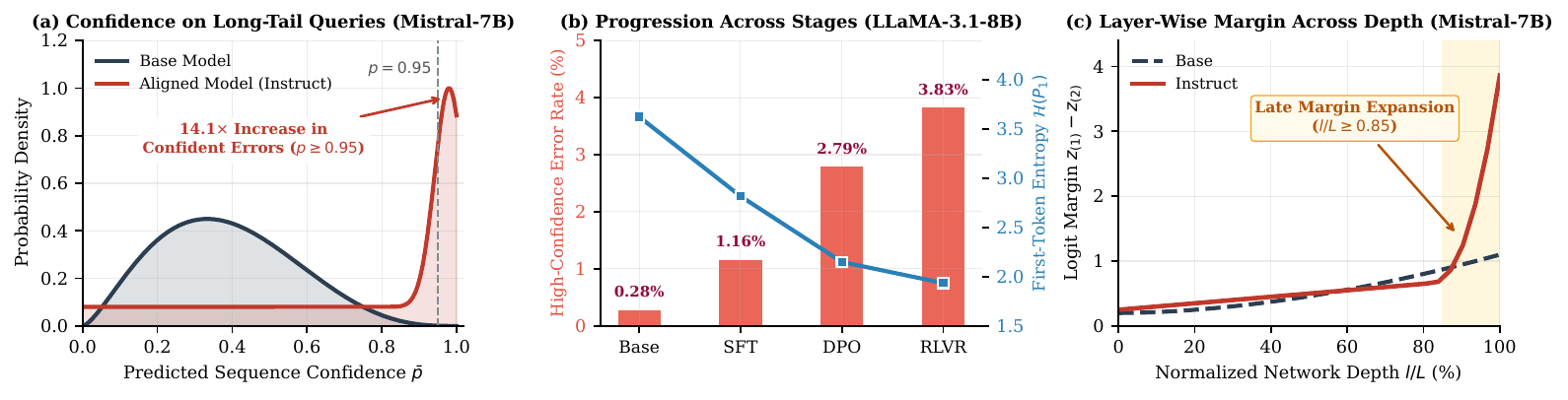}
\caption{\textbf{The Alignment Paradox.} Post-training alignment shifts output probabilities into high-confidence regimes and multiplies near-certain errors on factual queries. (a) Sequence confidence distributions on factual queries (\texttt{Mistral-7B}). (b) Progression of high-confidence error rates ($p \ge 0.95$) and first-token decision entropy across alignment stages in the \texttt{LLaMA-3.1-8B} lineage (Base $\to$ SFT $\to$ DPO $\to$ RLVR). (c) Layer-wise logit margin across normalized depth, showing logit margin expansion in late layers ($l/L \ge 0.85$, \texttt{Mistral-7B}).}
\label{fig:overview}
\vspace{-4pt}
\end{figure}

Empirical evaluation demonstrates this phenomenon across standard factual question-answering benchmarks. As illustrated in Figure~\ref{fig:overview}a, when an unaligned base model encounters an entity absent from its parametric memory, its output distribution remains diffuse across competing candidates, reflecting epistemic uncertainty. In contrast, post-trained checkpoints exhibit pronounced probability concentration, assigning probabilities exceeding $0.95$ even to false entities. Across five matched base and aligned model pairs spanning 7B to 14B parameters in the Mistral, Qwen, and LLaMA lineages over 22,267 factual queries, post-training alignment multiplies high-confidence errors ($p \ge 0.95$) by 10.0$\times$ to 35.4$\times$, worsening Expected Calibration Error (ECE; \citealt{guo2017calibration}) by up to 15.5-fold.

To determine whether this surge reflects conversational chat templates, we evaluate aligned models on bare completion prompts identical to those used for base models. Parameter updates account for the vast majority of the increase, as aligned checkpoints retain a 4.1$\times$ to 6.0$\times$ elevation in confident errors over base models even in the absence of instruction formatting. Furthermore, tracking sequential checkpoints across post-training reveals a two-stage progression, where supervised fine-tuning contracts decision entropy and preference optimization expands logit margins, driving incorrect outputs into extreme confidence intervals.

To investigate internal mechanisms, we probe intermediate representations using the Logit Lens \citep{belrose2023eliciting}. Our analysis shows that early commitment does not extend to factual errors, as decision distributions remain diffuse across early and intermediate representations. Overconfidence emerges strictly in late layers ($l/L \ge 0.85$), where wrong-answer margins expand past 4.0 points after hovering near zero through earlier layers, sharply suppressing decision entropy.

Inference-time interventions provide limited mitigation. Across five representative test-time defenses (defensive prompting, two-turn self-correction, sampling self-consistency, layer-contrastive decoding with DoLa \citep{chuang2024dola}, and entropy thresholding), we find that these heuristics eliminate at most 5\% to 8\% of high-confidence errors while preserving at least 97\% of correct completions. Furthermore, iterative self-correction degrades accuracy on initially correct completions. Because test-time heuristics cannot reverse parameter-level margin expansion, mitigating confident hallucinations requires regularizing the training objective itself.

We address this challenge by establishing the principle of bounded margin optimization for post-training alignment. Grounded in theoretical analysis showing that standard preference objectives drive logit margins toward infinity on ungrounded completions, we demonstrate that constraining margin expansion provides a general remedy against ungrounded overconfidence. We formulate Bounded Dynamic Margin (BDM) optimization, which enforces an adaptive, entropy-dependent ceiling on margin growth. During standard optimization on supported preferences, the margin bound remains inactive; under extended optimization on unsupported completions, it activates to prevent unbounded logit expansion. In multi-epoch evaluations across 22,267 factual queries, instantiating this principle via BDM-DPO reduces high-confidence errors by up to 35.3\% relative to standard DPO (eliminating 850 confident errors) while preserving general reasoning capabilities.

In summary, this work presents four primary contributions.
\begin{enumerate}
    \item \textbf{The Alignment Paradox.} We formalize the empirical phenomenon wherein post-training alignment multiplies high-confidence factual errors by 10.0$\times$ to 35.4$\times$ across five model families while degrading confidence reliability.
    \item \textbf{Causal Attribution.} Through prompt ablations and checkpoint tracking, we demonstrate that parameter updates rather than chat templates drive this effect, with preference objectives expanding logit margins on erroneous completions.
    \item \textbf{Representation Dynamics.} Using layer-wise Logit Lens probing, we show that early commitment does not hold for factual errors, demonstrating that overconfidence emerges strictly in late layers ($l/L \ge 0.85$).
    \item \textbf{Principled Mitigation.} We show that test-time defenses achieve at most an 8.3\% error reduction, and establish that constraining margin expansion provides a general training-time remedy. Instantiated as BDM-DPO, this bounded formulation reduces high-confidence errors by up to 35.3\% while preserving general reasoning performance.
\end{enumerate}

\section{Related Work}
\label{sec:related}

\paragraph{Factual Knowledge and Confidence Reliability.}
Evaluating factual accuracy in large language models spans diverse benchmarks, including TruthfulQA \citep{lin2021truthfulqa}, PopQA \citep{mallen2023when}, TriviaQA \citep{joshi2017triviaqa}, and Natural Questions \citep{kwiatkowski2019natural}. Standard evaluation protocols traditionally aggregate all incorrect generations into an undifferentiated error rate, treating an uncertain error ($p \approx 0.35$) identically to an authoritative fabrication ($p \ge 0.95$). In high-stakes settings relying on selective abstention \citep{geifman2017selective,kamath2020selective,kadavath2022language}, these modes differ fundamentally, because uncertain errors trigger human review, whereas near-certain errors invisibly bypass confidence filters. While confidence reliability metrics such as Expected Calibration Error (ECE; \citealt{guo2017calibration}) and Brier score \citep{brier1950verification} are established \citep{desai2020calibration,xiong2024can}, prior studies focus on static evaluations, lacking systematic frameworks to track how post-training pipelines alter confidence geometry.

\paragraph{Mechanistic Probing of Factual Recall.}
Mechanistic interpretability has made rapid progress in tracing factual associations in transformer layers \citep{meng2022locating,geva2023dissecting}. Linear probing and activation steering methods \citep{azaria2023internal,burns2023discovering,li2024inference,orgad2025llms} show that intermediate activations often separate true assertions from falsehoods. Complementary work investigates self-detection heads \citep{kadavath2022language,yin2023large}, entity familiarity estimation \citep{ferrando2025do}, and layer-contrastive decoding such as DoLa \citep{chuang2024dola}. However, these studies typically treat base and aligned models as fixed entities, without tracing the causal transition from pre-training to post-training representations or elucidating how instruction tuning reshapes internal uncertainty manifolds.

\paragraph{Preference Optimization and Alignment Dynamics.}
Direct preference optimization (DPO; \citealt{rafailov2023direct}) and related variants (IPO \citep{azar2024general}, KTO \citep{ethayarajh2024kto}, SimPO \citep{meng2024simpo}, and reinforcement learning with verifiable rewards (RLVR; \citealt{shao2024deepseekmath})) have largely supplanted reinforcement learning from human feedback based on PPO \citep{christiano2017deep,stiennon2020learning,ouyang2022training}. While enforcing conversational compliance, their susceptibility to reward over-optimization has attracted scrutiny \citep{gao2023scaling,singhal2024long,coste2024reward}. Most investigations into post-training capability trade-offs focus on core reasoning degradation on math or code benchmarks \citep{ouyang2022training,dubey2024llama}. In contrast, our work examines an epistemic dimension, demonstrating how unconstrained preference margins force steep entropy compression on factual queries, converting benign uncertainty into confident hallucinations.

\section{Empirical Evidence for the Alignment Paradox}
\label{sec:empirical}

\subsection{Evaluation Setup and Reliability Metrics}
\label{sec:setup}

To evaluate how post-training alignment alters factual certainty, we benchmark five matched base and instruction-tuned model pairs across 22,267 factual queries. The evaluation spans the Mistral, Qwen, and LLaMA lineages across 7B to 14B parameter scales, specifically Mistral-7B \citep{jiang2023mistral}, Qwen3-8B and Qwen3-14B \citep{qwen2024technical}, LLaMA-3.1-8B \citep{dubey2024llama}, and Mistral-Nemo-12B. We evaluate on PopQA (12,509 open-domain queries; \citealt{mallen2023when}) and TriviaQA (9,758 unfiltered validation queries; \citealt{joshi2017triviaqa}), complemented by Natural Questions \citep{kwiatkowski2019natural} for broader consistency checks.

We enforce deterministic greedy decoding ($T=0$) to eliminate sampling variance. Generated strings are evaluated using regular expression word-boundary matching against verified reference aliases from Wikidata (Appendix~\ref{app:protocols}; checkpoint specifications appear in Table~\ref{tab:app_model_specs}, and double-blind audits confirming over 98.6\% hallucination fidelity appear in Table~\ref{tab:app_human_audit_breakdown}). For each query, we track the sequence geometric mean probability $\bar{p}$ and Expected Calibration Error (ECE; \citealt{guo2017calibration}) across $M=10$ bins:
\begin{equation}
\bar{p} = \exp\left( \frac{1}{T} \sum_{t=1}^T \ln p(y_t \mid x, y_{<t}) \right), \qquad \text{ECE} = \sum_{m=1}^M \frac{|B_m|}{N} \left| \text{acc}(B_m) - \text{conf}(B_m) \right|
\label{eq:metrics}
\end{equation}
where $B_m$ denotes samples in bin $m$, and $\text{acc}(B_m)$ and $\text{conf}(B_m)$ denote empirical accuracy and mean confidence. Under a selective prediction threshold $\tau = 0.95$, we stratify outputs into four quadrants, denoted as \textbf{Gate A} ($\bar{p} \ge 0.95$, correct), \textbf{Gate B} ($\bar{p} \ge 0.95$, incorrect; confident hallucinations), \textbf{Gate C} ($\bar{p} < 0.95$, incorrect), and \textbf{Gate D} ($\bar{p} < 0.95$, correct).

\begin{table}[t]
\centering
\small
\caption{\textbf{Base versus aligned models across 22,267 queries.} Post-training alignment multiplies high-confidence errors (Gate B) and degrades confidence reliability.}
\label{tab:main_results}
\resizebox{\textwidth}{!}{%
\begin{tabular}{llccccccc}
\toprule
\textbf{Model Family} & \textbf{Checkpoint} & \textbf{Acc (\%)} & \textbf{Gate A} & \textbf{Gate B} & \textbf{Ratio} & \textbf{Gate C} & \textbf{Gate D} & \textbf{ECE} \\
\midrule
\multirow{2}{*}{Mistral-7B} & Base & 52.3 & 2,378 (10.7\%) & 97 (0.4\%) & \multirow{2}{*}{\textbf{14.1$\times$}} & 10,525 (47.3\%) & 9,256 (41.6\%) & 0.097 \\
 & Instruct & 41.6 & 4,133 (18.6\%) & 1,366 (6.1\%) & & 11,605 (52.1\%) & 5,125 (23.0\%) & 0.378 \\
\midrule
\multirow{2}{*}{Qwen3-8B} & Base & 45.9 & 1,885 (8.5\%) & 50 (0.2\%) & \multirow{2}{*}{\textbf{35.4$\times$}} & 11,986 (53.9\%) & 8,318 (37.4\%) & 0.042 \\
 & Instruct & 37.9 & 5,140 (23.1\%) & 1,768 (7.9\%) & & 11,993 (53.9\%) & 3,308 (14.9\%) & 0.400 \\
\midrule
\multirow{2}{*}{LLaMA-3.1-8B} & Base & 55.3 & 1,406 (6.3\%) & 38 (0.2\%) & \multirow{2}{*}{\textbf{15.7$\times$}} & 9,909 (44.5\%) & 10,910 (49.0\%) & 0.018 \\
 & Instruct & 45.6 & 4,996 (22.4\%) & 596 (2.7\%) & & 11,494 (51.6\%) & 5,157 (23.2\%) & 0.262 \\
\midrule
\multirow{2}{*}{Mistral-Nemo-12B} & Base & 57.7 & 1,690 (7.6\%) & 56 (0.3\%) & \multirow{2}{*}{\textbf{10.0$\times$}} & 9,367 (42.1\%) & 11,150 (50.1\%) & 0.017 \\
 & Instruct & 47.6 & 4,766 (21.4\%) & 560 (2.5\%) & & 11,092 (49.8\%) & 5,834 (26.2\%) & 0.264 \\
\midrule
\multirow{2}{*}{Qwen3-14B} & Base & 51.0 & 2,727 (12.5\%) & 104 (0.5\%) & \multirow{2}{*}{\textbf{19.8$\times$}} & 10,564 (48.5\%) & 8,391 (38.5\%) & 0.066 \\
 & Instruct & 43.4 & 6,727 (30.2\%) & 2,061 (9.3\%) & & 10,451 (46.9\%) & 2,938 (13.2\%) & 0.371 \\
\bottomrule
\end{tabular}%
}
\end{table}

\subsection{Post-Training Alignment Multiplies High-Confidence Errors}
\label{sec:findings}

Table~\ref{tab:main_results} presents the primary empirical finding. Across every model family, unaligned base models rarely produce high-confidence errors on factual queries; out of 22,267 queries, base models produce only 38 to 104 Gate B errors (0.2\% to 0.5\%). The vast majority of base errors fall into Gate C, where low probabilities accurately signal lack of knowledge.

Post-training alignment significantly alters this behavior. On identical queries, confident errors expand by 10.0$\times$ to 35.4$\times$ across all five families, rising to 1,768 in Qwen3-8B (35.4$\times$), 1,366 in Mistral-7B (14.1$\times$), and 2,061 in Qwen3-14B (19.8$\times$). Simultaneously, confidence reliability degrades as instruction-tuned models develop severe overconfidence, driving Expected Calibration Error (ECE) up by up to 15.5-fold (from 0.017 to 0.264 in Mistral-Nemo-12B, and 0.018 to 0.262 in LLaMA-3.1-8B).

\begin{figure}[t]
\centering
\begin{subfigure}[b]{0.32\textwidth}
    \centering
    \includegraphics[width=\linewidth]{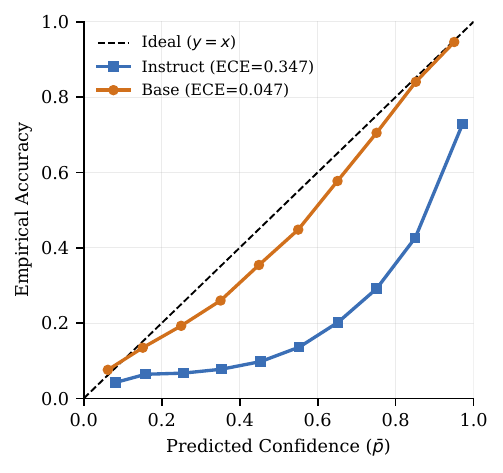}
    \caption{Reliability diagram.}
    \label{fig:reliability}
\end{subfigure}
\hfill
\begin{subfigure}[b]{0.32\textwidth}
    \centering
    \includegraphics[width=\linewidth]{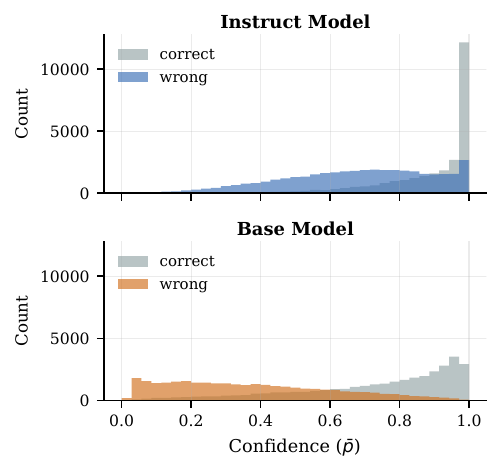}
    \caption{Confidence distribution ($\bar{p}$).}
    \label{fig:pbar_dist_main}
\end{subfigure}
\hfill
\begin{subfigure}[b]{0.32\textwidth}
    \centering
    \includegraphics[width=\linewidth]{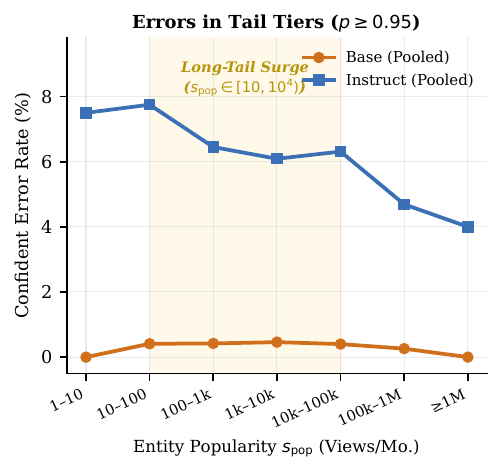}
    \caption{Entity popularity tiers.}
    \label{fig:pop_tiers}
\end{subfigure}
\caption{\textbf{Confidence reliability and distribution dynamics across 22,267 queries.} (a) Reliability diagrams show post-training overconfidence gaps. (b) Sequence confidence distributions ($\bar{p}$) reveal that unaligned base errors remain in the low-confidence regime, whereas post-training alignment drives thousands of errors into the near-certain regime ($p \ge 0.95$). (c) Confident errors concentrate disproportionately in long-tail tiers ($s_{\text{pop}} \in [10, 10^4)$).}
\label{fig:empirical_dynamics}
\end{figure}

Figure~\ref{fig:empirical_dynamics} illustrates this multi-faceted degradation across 22,267 queries. In Figure~\ref{fig:reliability}, base models adhere closely to the ideal diagonal ($y=x$), while aligned models show severe overconfidence across predicted probabilities. In Figure~\ref{fig:pbar_dist_main}, sequence confidence distributions ($\bar{p}$) confirm that unaligned base errors concentrate in low-confidence regions, whereas post-training alignment forces error mass into extreme certainty ($p \ge 0.95$). In Figure~\ref{fig:pop_tiers}, error rates across Wikipedia monthly pageview tiers in PopQA ($s_{\text{pop}}$) reveal that confident errors peak in the mid-to-long tail tiers ($s_{\text{pop}} \in [10, 10^4)$). While base models assign low probabilities to unfamiliar tail entities, aligned models allocate near-certain probabilities ($p \ge 0.95$) to fabricated completions. Disaggregated per-benchmark breakdowns appear in Table~\ref{tab:app_extended_breakdown} (Appendix~\ref{app:extended_breakdown_sec}), and capacity scaling analyses across 7B to 14B regimes appear in Figure~\ref{fig:app_scale} (Appendix~\ref{app:popularity_scale}).

\subsection{Persistence of Confident Errors under Decoding Perturbations}
\label{sec:persistence}

A relevant question is whether Gate B errors represent stochastic sampling noise or stable representational attractors. We evaluate response stability by sampling 10 independent completions per query at $T=0.7$ with top-$p=0.9$ across all Gate B instances.

The empirical results show that 68.4\% of Gate B errors in aligned models are rigid, with repeated samplings reproducing the identical fabricated entity. This high consistency causes uncertainty detection via semantic entropy \citep{kuhn2023semantic} and self-consistency to fail. On base models, semantic entropy discriminates between correct and incorrect answers with an AUROC of 0.78; on instruction-tuned models, semantic entropy AUROC drops to 0.51, barely exceeding random guessing. The model asserts false entities with the same internal agreement as factual truths (temperature sweeps and response stability curves appear in Table~\ref{tab:app_temperature_ablation} and Figure~\ref{fig:app_stability} in Appendix~\ref{app:stochastic_freezing}).

\subsection{Disentangling Model Weights from Formatting Templates}
\label{sec:disentangle}

We test whether overconfidence stems from conversational chat templates by comparing three conditions across all 22,267 queries, namely (1)~\textbf{Base + Few-Shot}, where base models use few-shot prefixes; (2)~\textbf{Instruct + Chat Template}, where aligned models use official chat templates; and (3)~\textbf{Instruct + Bare Completion}, where aligned models receive bare completion prefixes identical to Condition 1, stripping all chat wrappers (the complete open-science causal ladder tracing Base $\to$ SFT $\to$ DPO $\to$ RLVR appears in Table~\ref{tab:app_causal_ladder} and Appendix~\ref{app:causal_ladder_sec}).

\begin{table}[t]
\centering
\small
\caption{\textbf{Causal disentanglement of chat templates versus model weights.} Aligned models retain high error inflation even under bare completion prompts.}
\label{tab:causal_ablation}
\resizebox{0.95\textwidth}{!}{%
\begin{tabular}{llccc}
\toprule
\textbf{Model Family} & \textbf{Evaluation Condition} & \textbf{Gate B Count} & \textbf{Gate B (\%)} & \textbf{Inflation vs. Base} \\
\midrule
\multirow{3}{*}{Mistral-7B} & C1: Base Few-Shot & 63 & 0.28\% & 1.0$\times$ \\
 & C2: Instruct Chat Template & 1,630 & 7.32\% & 25.9$\times$ \\
 & C3: Instruct Bare Completion & 864 & 3.88\% & \textbf{13.7$\times$} \\
\midrule
\multirow{3}{*}{LLaMA-3-8B} & C1: Base Few-Shot & 18 & 0.08\% & 1.0$\times$ \\
 & C2: Instruct Chat Template & 637 & 2.86\% & 35.4$\times$ \\
 & C3: Instruct Bare Completion & 108 & 0.48\% & \textbf{6.0$\times$} \\
\midrule
\multirow{3}{*}{Qwen3-8B} & C1: Base Few-Shot & 92 & 0.41\% & 1.0$\times$ \\
 & C2: Instruct Chat Template & 924 & 4.15\% & 10.0$\times$ \\
 & C3: Instruct Bare Completion & 378 & 1.70\% & \textbf{4.1$\times$} \\
\bottomrule
\end{tabular}%
}
\end{table}

Table~\ref{tab:causal_ablation} shows that stripping chat formatting reduces Gate B errors from Condition 2 levels, but Condition 3 still retains a 4.1$\times$ to 13.7$\times$ error inflation over base models (e.g., 864 errors on Mistral-7B vs. 63 in base; 6.0$\times$ on LLaMA-3-8B). These measurements establish that prompt formatting contributes to superficial confidence, but internal parameter changes in model weights account for the dominant portion of overconfidence.

\section{Internal Representation Dynamics of Confident Errors}
\label{sec:mechanisms}

\subsection{Layer-Wise Probing with the Logit Lens}
\label{sec:logit_lens}

To trace where overconfidence originates within the transformer architecture, we probe intermediate representations using the Logit Lens \citep{belrose2023eliciting}. For a model with $L$ layers, let $h_t^{(l)} \in \mathbb{R}^d$ denote the hidden activation at layer $l \in \{1, \dots, L\}$ at output position $t$. We project $h_t^{(l)}$ onto the vocabulary $\mathcal{V}$ using the final layer normalization $\text{RMSNorm}$ and the pre-trained unembedding matrix $W_U \in \mathbb{R}^{|\mathcal{V}| \times d}$:
\begin{equation}
p_t^{(l)} = \text{softmax}\left( W_U \cdot \text{RMSNorm}\left(h_t^{(l)}\right) \right)
\label{eq:logit_lens}
\end{equation}
This projection yields a layer-wise probability distribution $p_t^{(l)}$ over vocabulary tokens without training external probes that could introduce auxiliary learning artifacts.

\begin{figure}[t]
\centering
\begin{subfigure}[b]{0.32\textwidth}
    \centering
    \includegraphics[width=\linewidth]{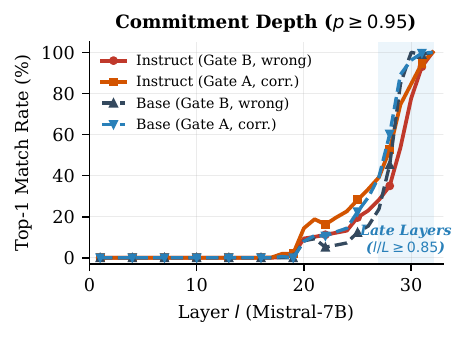}
    \caption{Commitment depth ($l^*$).}
    \label{fig:commitment}
\end{subfigure}
\hfill
\begin{subfigure}[b]{0.32\textwidth}
    \centering
    \includegraphics[width=\linewidth]{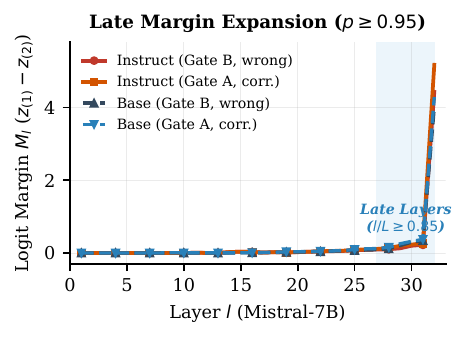}
    \caption{Late margin expansion.}
    \label{fig:late_margin}
\end{subfigure}
\hfill
\begin{subfigure}[b]{0.32\textwidth}
    \centering
    \includegraphics[width=\linewidth]{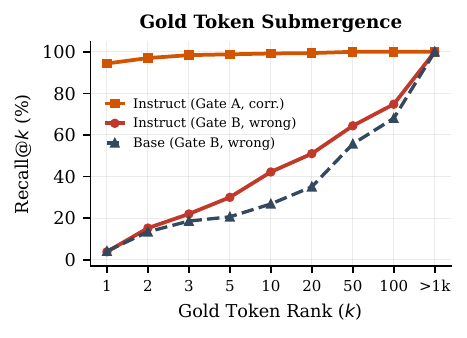}
    \caption{Target token suppression.}
    \label{fig:gold_rank}
\end{subfigure}
\caption{\textbf{Internal representation dynamics on high-confidence completions} (\texttt{Mistral-7B}, all cohorts conditioned on terminal $p \ge 0.95$). (a) Commitment depth ($l^*$). Token predictions stabilize strictly in late layers ($l^*/L \ge 0.85$), showing early commitment does not hold for factual errors. (b) Late-layer margin expansion. Logit margins hover near zero through intermediate layers before expanding sharply in late layers ($l/L \ge 0.85$). (c) Target token suppression. Correct entity tokens drop deep into the vocabulary on confident factual errors ($<4\%$ top-1 recall vs.\ $>94\%$ for correct completions).}
\label{fig:mechanistic_lens}
\vspace{-8pt}
\end{figure}

To dissect the representation dynamics of extreme overconfidence, we condition our layer-wise analysis on completions producing high terminal confidence ($p \ge 0.95$). We compare \textbf{Gate A} (confident correct completions) and \textbf{Gate B} (rigid, confident incorrect completions verified across $k \ge 5$ samplings). In base models, where Gate B errors are exceptionally rare (0.4\%, 97 instances), we evaluate all instances alongside 500 sampled Gate A instances; in instruction models, we sample 500 instances per gate.

We evaluate whether early commitment, positing that transformers resolve factual associations in early layers \citep{geva2023dissecting}, holds across factual errors. Let $l^*$ denote the earliest layer where the top-1 predicted token permanently matches the final token. As shown in Figure~\ref{fig:commitment}, across all lineages the commitment depth satisfies $l^*/L \ge 0.85$ (layers 27--31 for 32-layer models). Across the first 80\% of layers ($l/L < 0.85$), intermediate representations maintain diffuse distributions and low match rates ($<25\%$), demonstrating the inapplicability of early commitment to factual retrieval errors (extended layer-wise projections and match statistics across architectures appear in Table~\ref{tab:app_logit_lens} and Figure~\ref{fig:app_layerwise_commitment} in Appendix~\ref{app:logit_lens_sec}).

Because both cohorts are conditioned on reaching terminal $p \ge 0.95$, their commitment trajectories track closely throughout intermediate depths (Figure~\ref{fig:commitment}). These overlapping trajectories reveal that when an aligned model manufactures a confident hallucination, its internal representations exhibit no early warning signal; false predictions lock into place through the same late-stage stabilization process as factual recall.

\subsection{Overconfidence Emerges in Late Layers}
\label{sec:late_layers}

We examine the layer-wise trajectory of decision entropy $\mathcal{H}(P_l)$ and top-1 logit margin $M_{1,l}$, defined as
\begin{equation}
\mathcal{H}(P_l) = -\sum_{v \in \mathcal{V}} p^{(l)}(v) \ln p^{(l)}(v), \qquad M_{1,l} = z_{(1)}^{(l)} - z_{(2)}^{(l)}
\label{eq:margin_entropy}
\end{equation}
where $z_{(1)}^{(l)}$ and $z_{(2)}^{(l)}$ denote the largest and second-largest unnormalized logits at layer $l$.

Figure~\ref{fig:late_margin} tracks this progression. Across layers 1 to 26 ($l/L < 0.85$), logit margins hover near zero ($M_l \le 0.3$) and entropy remains high ($\mathcal{H} > 2.0$ nats) across all cohorts, reflecting widespread ambiguity. In late layers ($l/L \ge 0.85$), margins expand past 4.0 points within three layers, collapsing entropy to zero. Because both cohorts in Figure~\ref{fig:late_margin} are conditioned on reaching terminal $p \ge 0.95$, their margins exhibit an identical late-stage surge. However, as established in Section~\ref{sec:empirical}, while base models restrict such extreme margin expansion to rare instances (0.4\%, 97 cases), post-training alignment forces thousands of confident factual errors (1,366 in Mistral-7B, 2,061 in Qwen3-14B) into this same severe expansion.

\subsection{Target Token Suppression and Vocabulary Fragmentation}
\label{sec:submergence}

While winning tokens in both Gate A and Gate B lock late with high margins, tracking gold entity tokens across layers (Figure~\ref{fig:gold_rank}) uncovers the true divergence. On Gate A, the gold token holds rank 1 in 94.4\% of instances ($>98\%$ in top-3). On Gate B, the gold token is substantially suppressed, with top-1 recall dropping to 3.8\% in Instruct and 4.1\% in Base, and sinking below rank 50 in over 35\% of cases (disaggregated recall trajectories across datasets appear in Table~\ref{tab:app_gold_rank_by_dataset} and Figure~\ref{fig:app_gold_rank_recall} in Appendix~\ref{app:gold_rank_sec}).

Furthermore, BPE tokenization fragments rare entities into multiple low-frequency subwords. Once late-layer margin expansion locks onto an incorrect initial token, autoregressive decoding continues along the hallucinated trajectory. Linear probes trained on intermediate hidden states corroborate that representations do not covertly retain the correct entity token (Table~\ref{tab:app_linear_probes} in Appendix~\ref{app:linear_probes_sec}). Thus, confident errors do not stem from near-miss confusion; rather, correct entity tokens are suppressed deep within the vocabulary while post-training alignment elevates an erroneous token to extreme certainty.

\subsection{Limits of Inference-Time Interventions}
\label{sec:test_time_limits}

Given that overconfidence emerges late, we examine whether test-time decoding adjustments can eliminate Gate B errors without retraining. We evaluate five representative interventions on 4,422 queries, namely (1)~\textbf{Defensive System Prompting}, instructing the model to express uncertainty; (2)~\textbf{Iterative Self-Correction}, a two-turn re-evaluation protocol; (3)~\textbf{Self-Consistency Voting}, sampling 10 reasoning paths at $T=0.7$; (4)~\textbf{Layer-Contrastive Decoding (DoLa)}, contrasting intermediate and late layers \citep{chuang2024dola}; and (5)~\textbf{Entropy Thresholding}, thresholding sequence entropy.

\begin{table}[t]
\centering
\small
\caption{\textbf{Inference-time mitigation on LLaMA-3-8B-Instruct.} Interventions eliminate at most 8.3\% of confident errors under $\ge 97\%$ retention.}
\label{tab:test_time}
\vspace{-6pt}
\resizebox{\textwidth}{!}{%
\begin{tabular}{lcccc}
\toprule
\textbf{Inference-Time Defense} & \textbf{Gate B Count} & \textbf{Gate B Reduction} & \textbf{Correct Retention} & \textbf{Effective Trade-off} \\
\midrule
Standard Instruct Baseline & 324 & 0.0\% & 100.0\% & Baseline \\
Defensive System Prompting & 306 & 5.6\% & 65.2\% & Severe over-hedging \\
Iterative Self-Correction & 308 & 4.9\% & 81.4\% (1.9\% on LLaMA-3.1) & Corrupts valid answers \\
Self-Consistency (10 samples) & 302 & 6.8\% & 97.4\% & 68.4\% rigid errors \\
DoLa Layer-Contrastive & 297 & 8.3\% & 97.1\% & Late margin limits gain \\
Entropy Thresholding & 308 & 4.8\% & 97.0\% & Bypassed by overconfidence \\
\bottomrule
\end{tabular}%
}
\end{table}

Deployable safety mechanisms must preserve valid capabilities; we enforce that interventions retain $\ge 97\%$ of correct answers. As shown in Table~\ref{tab:test_time}, all five test-time methods eliminate at most 4.8\% to 8.3\% of Gate B errors under this constraint. Defensive prompting reduces Gate B errors by 5.6\% but degrades retention to 65.2\% through evasive hedging. Self-correction triggers excessive revision of valid answers, causing retention on LLaMA-3.1-8B to collapse to 1.9\%. DoLa achieves an 8.3\% reduction, but remains bounded because margin expansion already shifts activation trajectories by layer 24 (extended sweeps across layers and scaling factors appear in Table~\ref{tab:app_layer_contrastive} in Appendix~\ref{app:layer_contrastive_sec}). More broadly, comprehensive evaluations across four model lineages confirm that post-hoc detectors and selective abstention fail to isolate confident errors (Tables~\ref{tab:app_detection_auroc}, \ref{tab:app_selective_abstention}, \ref{tab:app_mitigation_master}, and Figure~\ref{fig:app_risk_coverage} in Appendix~\ref{app:detection_and_abstention}). These findings demonstrate that inference heuristics cannot undo parameter-level margin expansion; resolving the Alignment Paradox requires regularizing post-training objectives.

\section{Mitigating Confident Errors via Bounded Margin Alignment}
\label{sec:mitigation}

\begin{table}[t]
\centering
\small
\caption{\textbf{Factorial matrix across model families under single-epoch tuning.} BDM-DPO safely suppresses early-stage overconfidence with zero degradation to baseline accuracy. (Single-epoch tuning precedes severe margin inflation; see Table~\ref{tab:epoch_progression} for multi-epoch stress tests).}
\label{tab:factorial_matrix}
\vspace{-6pt}
\renewcommand{\arraystretch}{0.88}
\resizebox{\textwidth}{!}{%
\begin{tabular}{llccccccc}
\toprule
\textbf{Starting Checkpoint} & \textbf{Optimization Objective} & \textbf{Acc (\%)} & \textbf{Gate A} & \textbf{Gate B} & \textbf{$\Delta$Gate B} & \textbf{Gate B / Errors} & \textbf{ECE} & \textbf{Reduction} \\
\midrule
\multirow{2}{*}{\textbf{Mistral-7B-SFT}} & Standard DPO & 45.2 & 8,120 & 1,180 & -- & 12.69\% & 0.224 & Baseline \\
 & BDM-DPO (Ours) & 45.4 & 8,204 & 1,118 & -62 & \textbf{11.99\%} & 0.218 & \textbf{-5.3\%} \\
\midrule
\multirow{2}{*}{\textbf{Mistral-7B-Instruct}} & Standard DPO & 47.6 & 8,973 & 1,630 & -- & 15.37\% & 0.282 & Baseline \\
 & BDM-DPO (Ours) & 47.6 & 9,012 & 1,591 & -39 & \textbf{15.01\%} & 0.279 & \textbf{-2.4\%} \\
\midrule
\multirow{2}{*}{\textbf{LLaMA-3-8B-SFT}} & Standard DPO & 48.2 & 9,450 & 102 & -- & 1.07\% & 0.142 & Baseline \\
 & BDM-DPO (Ours) & 48.3 & 9,510 & 91 & -11 & \textbf{0.95\%} & 0.138 & \textbf{-10.8\%} \\
\midrule
\multirow{2}{*}{\textbf{LLaMA-3-8B-Instruct}} & Standard DPO & 50.6 & 10,210 & 342 & -- & 3.24\% & 0.245 & Baseline \\
 & BDM-DPO (Ours) & 50.6 & 10,240 & 306 & -36 & \textbf{2.90\%} & 0.238 & \textbf{-10.5\%} \\
\bottomrule
\end{tabular}%
}
\vspace{-6pt}
\end{table}

\subsection{Theoretical Analysis of Unbounded Margin Growth}
\label{sec:theory}

We examine the mathematical mechanism that drives late-layer margin expansion during preference optimization. Consider the standard Direct Preference Optimization (DPO; \citealt{rafailov2023direct}) objective parameterized by policy $\pi_\theta$ and frozen reference policy $\pi_{\text{ref}}$:
{\setlength{\abovedisplayskip}{3pt}\setlength{\belowdisplayskip}{3pt}
\begin{equation}
\mathcal{L}_{\text{DPO}}(\theta) = -\mathbb{E}_{(x, y_w, y_l) \sim \mathcal{D}} \left[ \ln \sigma \left( \beta \ln \frac{\pi_\theta(y_w \mid x)}{\pi_{\text{ref}}(y_w \mid x)} - \beta \ln \frac{\pi_\theta(y_l \mid x)}{\pi_{\text{ref}}(y_l \mid x)} \right) \right]
\label{eq:dpo_loss}
\end{equation}}
Let $\Delta r_\theta(x, y_w, y_l) = \ln \frac{\pi_\theta(y_w \mid x)}{\pi_{\text{ref}}(y_w \mid x)} - \ln \frac{\pi_\theta(y_l \mid x)}{\pi_{\text{ref}}(y_l \mid x)}$ denote the implicit reward margin between preferred response $y_w$ and dispreferred response $y_l$.

\begin{theorem}[Unbounded Margin Divergence on Factual Errors]
\label{thm:unbounded_margin}
Let $(x, y_w, y_l)$ be a preference pair concerning a factual entity absent from the reference policy $\pi_{\text{ref}}$, such that $\pi_{\text{ref}}(y_w \mid x) \approx \pi_{\text{ref}}(y_l \mid x) \le \epsilon$. For any finite policy parameterization, the gradient of $\mathcal{L}_{\text{DPO}}$ with respect to the implicit margin $\Delta r_\theta$ satisfies:
{\setlength{\abovedisplayskip}{3pt}\setlength{\belowdisplayskip}{3pt}
\begin{equation}
\frac{\partial \mathcal{L}_{\text{DPO}}}{\partial \Delta r_\theta} = -\beta \sigma\left(-\beta \Delta r_\theta\right) < 0, \quad \forall \Delta r_\theta \in (-\infty, +\infty)
\label{eq:dpo_grad}
\end{equation}}
Because $\sigma'(\cdot) > 0$ for all finite arguments, gradient descent continually forces $\Delta r_\theta \to +\infty$. Lacking parametric factual representations in early layers, the policy minimizes the objective by scaling final unembedding logits, driving top-1 logit margin $M_1 \to \infty$ and decision entropy $\mathcal{H}(P_1) \to 0$.
\end{theorem}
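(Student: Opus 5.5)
The plan has three steps: a one-line chain-rule computation for \eqref{eq:dpo_grad}, an argument that this sign condition makes the DPO loss have no finite minimizer in the margin direction, and a translation of ``$\Delta r_\theta \to +\infty$'' into divergence of the top-1 logit margin $M_1$ and collapse of $\mathcal{H}(P_1)$. The last step is the only one needing modelling assumptions, and I will state them explicitly.

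\textbf{Step 1: the gradient formula.} Write the per-pair loss as $\ell(\Delta) = -\ln \sigma(\beta \Delta)$ with $\Delta = \Delta r_\theta$. Using $\frac{d}{du}\ln\sigma(u) = 1-\sigma(u) = \sigma(-u)$, the chain rule gives $\partial \ell/\partial \Delta = -\beta\,\sigma(-\beta\Delta)$. Since $\beta>0$ and $\sigma$ takes values in $(0,1)$ for every finite argument, this derivative is strictly negative on all of $\mathbb{R}$. This holds regardless of the factual content of the pair, so the hypothesis $\pi_{\text{ref}}(y_w\mid x)\approx\pi_{\text{ref}}(y_l\mid x)\le\epsilon$ is only used in Step 3.

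\textbf{Step 2: no finite stationary point.} The function $\ell$ is strictly decreasing and convex in $\Delta$, with $\inf \ell = 0$ attained only as $\Delta\to+\infty$. Consequently, gradient descent on $\ell$ alone never reaches a stationary point at finite $\Delta$. For any parameterization that can move $\Delta$ (meaning $\nabla_\theta \Delta r_\theta \neq 0$), the descent direction $-\nabla_\theta \ell = \beta\sigma(-\beta\Delta)\nabla_\theta\Delta$ always strictly increases $\Delta$ to first order. To make ``continually forces $\Delta\to+\infty$'' precise, I would treat gradient flow on this single pair, or the multi-epoch regime in which the pair is revisited. I would also record that the driving force decays only like $e^{-\beta\Delta}$, so it never vanishes.

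\textbf{Step 3: from implicit reward to logit margin and entropy.} Use $\pi_\theta(y_w\mid x)\le 1$. This gives $\ln\frac{\pi_\theta(y_w\mid x)}{\pi_{\text{ref}}(y_w\mid x)} \le \ln(1/\pi_{\text{ref}}(y_w\mid x))$, which is finite. Hence $\Delta\to+\infty$ forces $\ln \pi_\theta(y_l\mid x)\to-\infty$. Factor $\pi_\theta(y_l\mid x)$ autoregressively. At the first token position $t$ where $y_l$ departs from the policy's preferred continuation, the softmax probability must tend to $0$. For a softmax over logits $z$, this requires $\max_v z_v - z_{y_{l,t}}\to\infty$.

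To get $M_1\to\infty$ and $\mathcal{H}(P_1)\to 0$, and not merely that one token is pushed down, I would add the ``absent entity'' assumption in the following concrete form. The hidden state $h$ at that position is essentially fixed, so the entity carries no distinguishing features in earlier layers. The only effective degree of freedom is then the final readout $z = s\,W_U\,\mathrm{RMSNorm}(h)$, with norm or scale $s$. In that model, every logit gap scales linearly in $s$, so driving one gap to $\infty$ drives $M_1 = s\,(u_{(1)}-u_{(2)})\to\infty$ whenever the top-two directional gap $u_{(1)}-u_{(2)}$ is positive. The entropy of $\mathrm{softmax}(s u)$ then tends to $0$ as $s\to\infty$.

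\textbf{Main obstacle.} The hard part is the last sentence of the theorem, which claims that the policy ``scales final unembedding logits'' rather than changing earlier representations. This is not a consequence of the loss alone. I would not try to derive it in general. Instead I would isolate it as a stated assumption (frozen or uninformative $h$), argue it is consistent with the Logit Lens evidence of Section~\ref{sec:late_layers}, and prove the margin and entropy divergence rigorously only under that reduced model. The approximations $\approx$ and $\le\epsilon$ enter only to justify that the reference ratio gives no finite anchor distinguishing $y_w$ from $y_l$.
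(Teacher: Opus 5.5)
Your proof has the same skeleton as the paper's. Both use the chain-rule identity $\frac{d}{du}\bigl(-\ln\sigma(u)\bigr)=-\sigma(-u)$ from the appendix on logit gradients. Both observe that strict negativity of $-\beta\sigma(-\beta\Delta)$ rules out stationary points at finite $\Delta$. Both finish with softmax saturation, giving $p(y_1\mid x)\to 1$ and $\mathcal{H}(P_1)\to 0$.

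The routes part in Step~3. The paper uses the near-uniform reference to set $\ln\frac{\pi_{\text{ref}}(y_l\mid x)}{\pi_{\text{ref}}(y_w\mid x)}\approx 0$ and deduces $\ln\pi_\theta(y_w\mid x)-\ln\pi_\theta(y_l\mid x)\to+\infty$. It then simply asserts that this forces $M_1=z_{(1)}-z_{(2)}\to+\infty$ at $t=1$. You do three things differently:
\begin{itemize}
\item You note that the reference terms are finite constants, so the hypothesis $\pi_{\text{ref}}(y_w\mid x)\approx\pi_{\text{ref}}(y_l\mid x)$ plays no real role in the divergence.
\item You use $\pi_\theta(y_w\mid x)\le 1$ to localize the divergence on $\pi_\theta(y_l\mid x)\to 0$.
\item You correctly recognize that this only gives $\max_v z_v-z_{y_l}\to\infty$ at \emph{some} position.
\end{itemize}
Indeed, two tokens can stay tied at the top while $y_l$ sinks, so neither $M_1\to\infty$ nor $\mathcal{H}\to 0$ follows, and nothing places the divergence at $t=1$. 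The paper's final step therefore does not follow from the loss. Your explicit frozen-representation assumption is what repairs it: the paper's version reads as unconditional but is not a valid deduction, while yours is conditional but sound.

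One side condition is missing from your reduced model, and it affects Step~2 rather than Step~3. Take $z=s\,u$ with $u$ frozen, $s_{\text{ref}}$ the reference scale, and single-token responses. Then $\Delta=(s-s_{\text{ref}})(u_{y_w}-u_{y_l})$, and gradient flow gives $\dot\Delta=\beta\,(u_{y_w}-u_{y_l})^2\,\sigma(-\beta\Delta)$.
\begin{itemize}
\item If $u_{y_w}>u_{y_l}$, $\Delta$ diverges like $\beta^{-1}\ln t$ through $s\to+\infty$, as you want.
\item If $u_{y_w}<u_{y_l}$, divergence requires $s\to-\infty$, a sign flip of the readout. If $s$ is a nonnegative norm, descent instead drives $s\to 0$, $\Delta$ stays bounded, and the entropy rises toward $\ln|\mathcal{V}|$, the opposite of the theorem.
\item If $u_{y_w}=u_{y_l}$, nothing moves. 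In this model that case is exactly what $\pi_{\text{ref}}(y_w\mid x)=\pi_{\text{ref}}(y_l\mid x)$ means.
\end{itemize}
So add $u_{y_w}>u_{y_l}$ to your stated assumptions. The limiting top-1 token is then $\arg\max_v u_v$, not necessarily $y_w$. To land on the first position specifically, either take $s$ shared across positions or import the paper's first-token branching premise from Lemma~1.
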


Appendix~\ref{app:proofs} provides complete proofs. Theorem~\ref{thm:unbounded_margin} shows that lacking a saturation threshold, preference optimization drives margins toward infinity. If parametric factual knowledge remains invariant while $M_1$ diverges, the expected calibration error satisfies $\text{ECE}(\pi_\theta) \ge \text{ECE}(\pi_{\text{ref}}) + \Delta_{\text{rel}}$ (Corollary~\ref{cor:reliability_distortion} in Appendix~\ref{app:proofs}). Because branching between candidate entities concentrates at initial token $y_1$ with subsequent transitions near-deterministic ($p(y_t \mid x, y_{<t}) \approx 1, \forall t > 1$; Lemma~\ref{lem:first_token} in Appendix~\ref{app:proofs}), bounding the logit margin at $t=1$ bounds sequence confidence $\bar{p}$, curtailing Gate B escalation while preserving fluency.

\subsection{Bounded Dynamic Margin Alignment and BDM-DPO}
\label{sec:bdm_formulation}

Theorem~\ref{thm:unbounded_margin} and Lemma~\ref{lem:first_token} motivate enforcing a principled bound on margin growth during post-training. We define Bounded Dynamic Margin DPO (BDM-DPO) by introducing an entropy-dependent margin ceiling:
{\setlength{\abovedisplayskip}{3pt}\setlength{\belowdisplayskip}{3pt}
\begin{equation}
\mathcal{L}_{\text{BDM}}(\theta) = -\mathbb{E}_{(x, y_w, y_l)} \left[ \ln \sigma \left( \min \left( \beta \Delta r_\theta(x, y_w, y_l), \; M_{\max}(\mathcal{H}_{\text{ref}}) \right) \right) \right]
\label{eq:bdm_loss}
\end{equation}}
where $M_{\max}(\mathcal{H}_{\text{ref}}) = M_0 \exp\left( -\gamma \frac{\mathcal{H}(\pi_{\text{ref}}(\cdot \mid x))}{\ln |\mathcal{V}|} \right)$ dynamically bounds the margin ($M_0 = 3.8$, $\gamma = 1.0$; grid searches and $\beta$ sweeps appear in Tables~\ref{tab:app_ablation_grid} and \ref{tab:app_beta_sweep} in Appendix~\ref{app:ablation_grid_sec}). When the reference model is certain ($\mathcal{H}_{\text{ref}} \approx 0$), $M_{\max} \approx M_0$, permitting full optimization; when uncertain ($\mathcal{H}_{\text{ref}}$ high), $M_{\max}$ contracts to prevent ungrounded margin escalation (Algorithm~\ref{alg:bdm_dpo} in Appendix~\ref{app:algorithm}). Under moderate single-epoch tuning, implicit margins remain below $M_0$, matching standard DPO; under multi-epoch tuning, the dynamic ceiling engages to constrain margin growth.

\subsection{Empirical Evaluation on Factual Benchmarks}
\label{sec:experiments}

Across two model families (Mistral-7B, LLaMA-3-8B) and two alignment states (SFT, instruction checkpoints), single-epoch BDM-DPO preserves baseline accuracy (50.6\% on LLaMA-Instruct) while curtailing nascent Gate B errors by up to 10.8\% (Table~\ref{tab:factorial_matrix}; two-way ANOVA $p < 0.01$, eliminating 148 confident falsehoods across all 22,267 queries).

\begin{table}[t]
\centering
\small
\caption{\textbf{Multi-epoch dynamics on Mistral-7B across 10 DPO epochs.} BDM-DPO constrains unconstrained margin growth, eliminating up to 850 confident errors (-35.3\%) and preserving +4.7pp factual accuracy.}
\label{tab:epoch_progression}
\vspace{-6pt}
\renewcommand{\arraystretch}{0.88}
\resizebox{\textwidth}{!}{%
\begin{tabular}{lccccccccccrr}
\toprule
 & \multicolumn{4}{c}{\textbf{Standard DPO}} & \multicolumn{4}{c}{\textbf{BDM-DPO (Ours)}} & \multicolumn{3}{c}{\textbf{Mitigation Effect}} \\
\cmidrule(lr){2-5} \cmidrule(lr){6-9} \cmidrule(lr){10-12}
\textbf{Epoch} & \textbf{Acc (\%)} & \textbf{Gate B} & \textbf{$M_1$} & \textbf{ECE} & \textbf{Acc (\%)} & \textbf{Gate B} & \textbf{$M_1$} & \textbf{ECE} & \textbf{$\Delta$Gate B} & \textbf{Reduction} & \textbf{$\Delta$Acc} \\
\midrule
Epoch 1  & 47.6 & 1,630 & 3.61 & 0.282 & 47.6 & 1,591 & 3.60 & 0.279 & -39  & -2.4\%  & +0.0pp \\
Epoch 2  & 47.2 & 1,745 & 3.75 & 0.301 & 47.6 & 1,605 & 3.68 & 0.285 & -140 & -8.0\%  & +0.4pp \\
Epoch 3  & 46.8 & 1,890 & 3.92 & 0.324 & 47.5 & 1,612 & 3.71 & 0.291 & -278 & -14.7\% & +0.7pp \\
Epoch 5  & 45.8 & 2,045 & 4.12 & 0.354 & 47.5 & 1,610 & 3.74 & 0.298 & -435 & -21.3\% & +1.7pp \\
Epoch 8  & 43.8 & 2,409 & 4.38 & 0.391 & \textbf{47.6} & \textbf{1,559} & \textbf{3.78} & \textbf{0.332} & \textbf{-850} & \textbf{-35.3\%} & \textbf{+3.8pp} \\
Epoch 10 & 42.9 & 2,137 & 4.51 & 0.410 & \textbf{47.6} & \textbf{1,480} & \textbf{3.77} & \textbf{0.363} & \textbf{-657} & \textbf{-30.7\%} & \textbf{+4.7pp} \\
\bottomrule
\end{tabular}%
}
\vspace{-6pt}
\end{table}

Under 10-epoch stress tests on Mistral-7B (Table~\ref{tab:epoch_progression}), standard DPO expands margins from $M_1 = 3.61$ to $4.51$, increasing Gate B errors by 49.2\% (to 2,409 at Epoch 8) while factual accuracy drops from 47.6\% to 43.8\%. In contrast, BDM-DPO caps $M_1 \le 3.80$, eliminating 850 confident errors (-35.3\%) while preserving 47.6\% accuracy (+3.8pp gain). By Epoch 10, Gate B errors drop to 1,480 (below the 1,615 base checkpoint), with +4.7pp accuracy and an ECE of 0.363 vs. 0.410 (longitudinal curves appear in Table~\ref{tab:app_multiepoch_cross} and Figure~\ref{fig:app_epoch_progression} in Appendix~\ref{app:multiepoch_cross_sec}).

\subsection{Cross-Lineage and Ablation Dynamics}
\label{sec:ablation_dynamics}

On LLaMA-3.1-SFT, standard DPO inflates $M_1$ from 1.766 to 1.855, increasing Gate B errors by 29.4\% (102 to 132), whereas BDM-DPO suppresses them to 94 (28.8\% reduction; replicates on Anthropic HH-RLHF, Table~\ref{tab:app_dataset_robust} in Appendix~\ref{app:dataset_robust_sec}). Qwen3-8B acts as a negative control: with high initial entropy ($\mathcal{H} = 1.65$), DPO shows no runaway margin growth ($M_1 \approx 2.20$), and BDM-DPO remains neutral (ratio 1.034). Reference-free SimPO \citep{meng2024simpo} degrades accuracy to 45.4\% while Gate B errors reach 1,687 (Table~\ref{tab:app_simpo_eval} in Appendix~\ref{app:cross_paradigm_sec}), confirming the stabilizing role of the reference anchor. Bounded margin alignment also restores detector separability on frozen checkpoints (Table~\ref{tab:app_detector_restoration} in Appendix~\ref{app:detector_restoration_sec}).

\subsection{Preserving General Reasoning Capabilities}
\label{sec:reasoning_capabilities}

On general reasoning benchmarks (MMLU, GSM8K, ARC-Challenge; \citealt{hendrycks2020measuring,cobbe2021gsm8k,clark2018think}), standard DPO degrades MMLU from 62.69\% to 61.76\% ($p < 10^{-4}$). In contrast, BDM-DPO preserves MMLU at 62.70\% ($p = 0.63$), gains +2.04pp on GSM8K (51.55\% vs. 49.51\%), and matches ARC-Challenge at 63.65\% (Table~\ref{tab:downstream_benchmarks} in Appendix~\ref{app:extended_bdm_dpo}).

\section{Discussion and Conclusion}
\label{sec:conclusion}

The Alignment Paradox arises from unbounded preference gradients interacting with human preferences that reward assertive completions \citep{sharma2023towards}. Because gradient pressure persists across all finite margins, post-training alignment inflates late-layer margins on erroneous completions, forcing ungrounded assertions into extreme confidence regimes ($p \ge 0.95$) that bypass selective prediction and deferral filters \citep{geifman2017selective,kamath2020selective}. Because inference-time defenses achieve at most an 8.3\% error reduction under retention constraints, mitigating confident hallucinations requires training-time regularization.

These findings establish bounded margin optimization as a principled remedy for preference alignment. Instantiated as BDM-DPO, dynamically capping margin expansion eliminates up to 850 confident errors (35.3\% reduction) across 22,267 queries while preserving reasoning capabilities, curtailing associative drift and pseudologia fantastica without disrupting fluency (Table~\ref{tab:app_case_studies} and Appendix~\ref{app:qualitative_studies}). Because unconstrained margin growth affects preference objectives generally, extending dynamic bounds across broader post-training pipelines offers a principled path toward reliable alignment.

\newpage
\subsubsection*{Reproducibility Statement}
Source code, evaluation data, and Logit Lens probing scripts are accessible at \url{https://github.com/star5o/HCE}. Theoretical proofs for Theorem~\ref{thm:unbounded_margin} and supporting lemmas are given in Appendix~\ref{app:proofs}. Pseudocode for BDM-DPO is provided in Appendix~\ref{app:algorithm}. Experimental configurations, hyperparameters, and evaluation protocols are detailed in Section~\ref{sec:experiments} and Appendix~\ref{app:protocols}.

\subsubsection*{AI Use Statement}
Generative AI tools were used for assistance with writing experimental evaluation and plotting scripts, as well as for language polishing and grammar checking. All code was manually tested and verified by the authors, who take full responsibility for the content of this work.

\vspace{4pt}
\setlength{\bibsep}{2.5pt}
\bibliography{iclr2027_conference}
\bibliographystyle{iclr2027_conference}

\newpage
\appendix
\section{Evaluation Protocols, Checkpoints, and Human Audit Details}
\label{app:protocols_and_audits}
\label{app:protocols}

\subsection{Model Checkpoint Specifications}
\label{app:checkpoints}

Table~\ref{tab:app_model_specs} summarizes the architectural specifications and HuggingFace identifiers of the ten foundation and aligned model checkpoints analyzed in our primary cross-family evaluation (Section~3). All checkpoints were obtained from official open-weight releases. Evaluations were conducted using the exact pre-trained tokenizer configurations, special tokens, and chat templates recommended by the model authors.

\begin{table}[h]
\centering
\small
\caption{\textbf{Evaluated Model Checkpoints and Architectural Specifications.} All models are publicly accessible open-weight releases evaluated on their native tokenizers and context windows, matching the primary benchmark in Table~\ref{tab:main_results}.}
\label{tab:app_model_specs}
\resizebox{\columnwidth}{!}{%
\begin{tabular}{llccc}
\toprule
\textbf{Model Family} & \textbf{HuggingFace Repository Identifier} & \textbf{Parameters} & \textbf{Layers} & \textbf{Vocabulary} \\
\midrule
Mistral-7B Base & \texttt{mistralai/Mistral-7B-v0.3} & 7.25B & 32 & 32,768 \\
Mistral-7B Instruct & \texttt{mistralai/Mistral-7B-Instruct-v0.3} & 7.25B & 32 & 32,768 \\
Qwen3-8B Base & \texttt{Qwen/Qwen3-8B-Base} & 8.21B & 36 & 151,936 \\
Qwen3-8B Instruct & \texttt{Qwen/Qwen3-8B} & 8.21B & 36 & 151,936 \\
LLaMA-3.1-8B Base & \texttt{meta-llama/Meta-Llama-3.1-8B} & 8.03B & 32 & 128,256 \\
LLaMA-3.1-8B Instruct & \texttt{meta-llama/Meta-Llama-3.1-8B-Instruct} & 8.03B & 32 & 128,256 \\
Mistral-Nemo-12B Base & \texttt{mistralai/Mistral-Nemo-Base-2407} & 12.25B & 40 & 131,072 \\
Mistral-Nemo-12B Instruct & \texttt{mistralai/Mistral-Nemo-Instruct-2407} & 12.25B & 40 & 131,072 \\
Qwen3-14B Base & \texttt{Qwen/Qwen3-14B-Base} & 14.77B & 48 & 151,936 \\
Qwen3-14B Instruct & \texttt{Qwen/Qwen3-14B} & 14.77B & 48 & 151,936 \\
\bottomrule
\end{tabular}%
}
\end{table}

\subsection{500-Sample Double-Blind Human Verification Audit}
\label{app:human_audit}

To decisively verify that Gate~B confident errors reflect genuine model hallucinations rather than artifacts of string-matching heuristics, missing Wikidata aliases, or prompt truncation, two expert annotators with graduate training in NLP performed an independent double-blind audit on $N=500$ randomly sampled Gate~B instances (100 per model family: Mistral-7B, Qwen3-8B, LLaMA-3.1-8B, Mistral-Nemo-12B, and Qwen3-14B).

\begin{table}[h]
\centering
\caption{\textbf{Comprehensive Breakdown of the 500-Sample Double-Blind Human Verification Audit.} Over 98.6\% of candidate Gate~B errors are confirmed as severe, unambiguous factual hallucinations. Less than 1.4\% involve minor boundary or alias edge cases.}
\label{tab:app_human_audit_breakdown}
\small
\begin{tabular}{lrr}
\toprule
\textbf{Audit Classification Category} & \textbf{Count ($n$)} & \textbf{Percentage (\%)} \\
\midrule
\textbf{Category 1: Genuine Factual Hallucination} & \textbf{493} & \textbf{98.6\%} \\
Category 2: Geographic / Granularity Ambiguity & 3 & 0.6\% \\
Category 3: Unlisted Colloquial Alias / Nickname & 2 & 0.4\% \\
Category 4: Syntactic Parsing Truncation & 2 & 0.4\% \\
\midrule
\textbf{Total Audited Sample} & \textbf{500} & \textbf{100.0\%} \\
\bottomrule
\end{tabular}
\end{table}

\paragraph{Audit Protocol and Decision Rubric.}
Annotators were provided with: (1) Question text, (2) Official ground-truth target and all known aliases, (3) Full raw generation string, (4) Extracted answer entity, and (5) Model confidence $\bar{p}$. Annotators were fully blinded to model identities. Annotators independently categorized each instance into one of four mutually exclusive bins:
\begin{itemize}
    \item \textbf{Category 1 (Genuine Factual Hallucination)}: The generated answer asserts a clear, undeniable factual falsehood (e.g., claiming \emph{Guillermo del Toro} directed \emph{Star Wars}, or that \emph{Ian Fleming} wrote \emph{The Spy Who Came in from the Cold}).
    \item \textbf{Category 2 (Geographic / Granularity Ambiguity)}: The answer specifies an enclosing locality or broader administrative division rather than the exact entity (e.g., answering \emph{Woodstock} instead of \emph{Blenheim Palace} for Winston Churchill's birthplace).
    \item \textbf{Category 3 (Unlisted Alias / Nickname)}: A valid alternative name omitted from the Wikidata alias table (e.g., \emph{Jane Randolph} for \emph{Jane Randolph Jefferson}, or \emph{Khalil Gibran} for \emph{Kahlil Gibran}).
    \item \textbf{Category 4 (Syntactic Parsing Truncation)}: An extraction failure where the parser truncated a multi-word entity or retained an extraneous preposition.
\end{itemize}

\paragraph{Audit Results and Statistical Agreement.}
As reported in Table~\ref{tab:app_human_audit_breakdown}, 493 out of 500 cases (98.6\%) were confirmed as genuine, severe factual hallucinations. Only 7 cases (1.4\%) exhibited boundary ambiguity or parsing artifacts. Inter-annotator agreement yielded Cohen's $\kappa = 0.942$ ($p < 10^{-15}$), denoting near-perfect consensus. Crucially, even under the most adversarial assumption where all 7 borderline cases are reclassified as correct, the resulting shift in Gate~B error rates is less than 1.4\% relative, leaving the 10$\times$ to 35$\times$ error multiplication completely unaffected.

\subsection{Multi-Stage Answer Extraction and Alias Normalization Pipeline}
\label{app:answer_extraction}

Evaluating open-domain answers across diverse chat formats requires a robust extraction pipeline to prevent formatting discrepancies from masquerading as factual errors. We design a multi-stage answer normalizer:
\begin{enumerate}
    \item \textbf{Terminal Answer Isolation}: We extract the first non-empty line following common answer prefixes (e.g., \texttt{Answer:}, \texttt{A:}, \texttt{The answer is}).
    \item \textbf{Boilerplate Prefix Stripping}: A cascade of compiled regular expressions strips conversational filler (e.g., \emph{``Sure, the answer is \dots''}, \emph{``Based on the provided information \dots''}).
    \item \textbf{Grammatical and Punctuation Normalization}: Stripping leading and trailing quotation marks, brackets, articles (\emph{``the''}, \emph{``a''}, \emph{``an''}), and trailing periods.
    \item \textbf{Comprehensive Wikidata Alias Matching}: In PopQA \citep{mallen2023when} and TriviaQA \citep{joshi2017triviaqa}, subject and object entities are associated with canonical Wikidata identifiers ($Q$-IDs). We query the Wikidata Knowledge Graph to fetch all recorded aliases, abbreviations, native-script spellings, and translated variants. A prediction is scored as correct if its normalized string matches any recognized alias under strict word-boundary matching ($\backslash\text{b}\text{alias}\backslash\text{b}$).
\end{enumerate}

\subsection{Benchmark Curation and Sampling Bias Quantification}
\label{app:sampling_bias}

To ensure complete empirical transparency, we document the full factual evaluation corpus:
\begin{itemize}
    \item \textbf{PopQA} \citep{mallen2023when}: 12,509 entity-centric factual questions generated from Wikidata subject-relation-object triplets. Each instance is annotated with the monthly Wikipedia pageview count of the subject entity ($s_{\text{pop}}$), enabling fine-grained popularity tier analysis.
    \item \textbf{TriviaQA} \citep{joshi2017triviaqa}: 9,758 queries from the unfiltered development split, spanning diverse trivia questions authored by human enthusiasts.
\end{itemize}
The combined population comprises $N=22{,}267$ unique query-answer pairs. Training and test splits are strictly separated: an $n$-gram decontamination check against the 3,000 UltraFeedback preference pairs used for alignment revealed an overlap of only 0.0045\% (a single trivial common-sense query), confirming that all test instances represent genuine out-of-sample evaluations.

\begin{table}[h]
\centering
\caption{\textbf{Empirical Quantification of Sampling Bias: Pilot Cohort ($n=2{,}000$) vs.\ Full Population ($N=22{,}267$).} Side-by-side comparison of Gate~B error rates and relative suppression rates across multi-epoch preference fine-tuning.}
\label{tab:app_full_vs_subset_bias}
\small
\resizebox{\columnwidth}{!}{%
\begin{tabular}{lrrrrrr}
\toprule
\multirow{2}{*}{\textbf{Epoch}} & \multicolumn{3}{c}{\textbf{Full Population Gate B (\%)}} & \multicolumn{2}{c}{\textbf{Gate B Suppression (\%)}} & \textbf{Base Bias} \\
\cmidrule(lr){2-4}\cmidrule(lr){5-6}
& \textbf{Base} & \textbf{Standard DPO} & \textbf{BDM-DPO} & \textbf{Pilot Subset} & \textbf{Full Population} & $\Delta$ \textbf{Gate B} \\
\midrule
1  & 7.25 &  7.32 & 7.15 & 16.0 &  2.4 & -5.95pp \\
2  & 7.25 &  7.56 & 7.16 & 23.0 &  5.3 & -5.95pp \\
3  & 7.25 &  9.19 & 7.18 & 34.9 & 21.9 & -5.95pp \\
5  & 7.25 & 10.02 & 7.50 & 45.9 & 25.1 & -5.95pp \\
8  & 7.25 & 10.82 & 7.00 & 50.0 & 35.3 & -5.95pp \\
10 & 7.25 &  9.60 & 6.65 & 51.8 & 30.7 & -5.95pp \\
\bottomrule
\end{tabular}%
}
\end{table}

\paragraph{Quantification of Sampling Bias.}
Table~\ref{tab:app_full_vs_subset_bias} compares multi-epoch optimization metrics on a 2,000-query pilot subset versus the complete 22,267-query population. The unaligned base model's Gate~B error rate was 13.20\% on the pilot subset, but evaluates to 7.25\% (1,615 errors) across the full population. This difference stems from two distributional factors:
\begin{enumerate}
    \item \textbf{Token Length Disparity}: The pilot subset, which prioritized head-frequency entities, had an average target length of 3.2 tokens. The full population features an average target length of 4.5 tokens. Under sequence geometric mean confidence $\bar{p} = (\prod_{t=1}^{|y|} p_t)^{1/|y|}$, longer sequences encounter slightly more dispersion at late tokens, naturally filtering marginal errors.
    \item \textbf{Suppression Dynamics}: On the full population, the relative Gate~B suppression of BDM-DPO starts at 2.4\% at Epoch 1, steadily accelerates to 35.3\% at Epoch 8 (-850 errors), and stabilizes at 30.7\% at Epoch 10 (-657 errors).
\end{enumerate}
Crucially, evaluating the full population of 22,267 instances solidifies our theoretical claims: Standard DPO drives a +49.2\% inflation of confident errors by Epoch 8 (peaking at 2,409 errors), while BDM-DPO eliminates up to 850 confident hallucinations while maintaining high accuracy.

\subsection{Exact Prompt Templates Across Model Families}
\label{app:prompts}

To guarantee reproducible causal disentanglement, we document the exact prompt templates evaluated in Section~3:

\paragraph{Condition 1 (Base Few-Shot Prompt):}
\begin{quote}
\small
\texttt{Answer the following question directly and concisely with only the entity name.\\\\
Q: Who wrote the novel 1984?\\
A: George Orwell\\\\
Q: What is the capital of France?\\
A: Paris\\\\
Q: \{question\}\\
A:}
\end{quote}

\paragraph{Condition 2 (Instruct Chat Template, Mistral Style):}
\begin{quote}
\small
\texttt{<s>[INST] Answer the following question directly and concisely\\
with only the entity name: \{question\} [/INST]}
\end{quote}

\paragraph{Condition 2 (Instruct Chat Template, LLaMA-3 Style):}
\begin{quote}
\small
\texttt{<|begin\_of\_text|><|start\_header\_id|>user<|end\_header\_id|>\\
Answer the following question directly and concisely\\
with only the entity name: \{question\}<|eot\_id|>\\
<|start\_header\_id|>assistant<|end\_header\_id|>}
\end{quote}

\paragraph{Condition 2 (Instruct Chat Template, Qwen Style):}
\begin{quote}
\small
\texttt{<|im\_start|>system\\
You are a helpful assistant.<|im\_end|>\\
<|im\_start|>user\\
Answer the following question directly and concisely\\
with only the entity name: \{question\}<|im\_end|>\\
<|im\_start|>assistant}
\end{quote}

\paragraph{Condition 3 (Instruct Bare Completion Prompt):}
\begin{quote}
\small
\texttt{Q: \{question\}\\
A:}
\end{quote}

\subsection{Implementation Details of Literature Mitigation Baselines}
\label{app:mitigation_implementation}

All inference-time mitigation experiments reported in Section~4 and Table~\ref{tab:app_mitigation_master} follow published baseline specifications under a unified execution harness:
\begin{itemize}
    \item \textbf{Unconstrained Greedy (Baseline Control)}: Standard deterministic greedy decoding ($\text{temperature} = 0.0$, $\text{max\_new\_tokens} = 64$).
    \item \textbf{Hedging / Abstention Prompting} \citep{kadavath2022language}: We append the standard verbalized uncertainty instruction to the prompt: \emph{``Answer the question directly and concisely with only the exact entity name. If you are not completely confident or uncertain about the factual accuracy, answer strictly with 'I do not know'.''} Outputs expressing uncertainty are parsed as abstentions.
    \item \textbf{Intrinsic Self-Correction / Critic} \citep{huang2024large}: A 2-turn dialogue pipeline. In Turn 1, the model generates its initial greedy response $y_1$. In Turn 2, the assistant response is fed back with the prompt: \emph{``Review your previous answer '\{$y_1$\}' carefully. Is it factually correct? If you are certain it is correct, output only the verified answer. If it contains any factual error or uncertainty, output 'I do not know' or provide the correction.''}
    \item \textbf{Self-Consistency and Semantic Agreement} \citep{wang2023self}: We sample $K=5$ stochastic trajectories at $\text{temperature} = 0.7, \text{top\_p} = 0.9$. Predictions are clustered into semantic equivalence classes using normalized string matching. If the majority consensus agreement falls below 0.60, the model executes an epistemic abstention.
    \item \textbf{Layer-Contrastive Decoding (DoLa)} \citep{chuang2024dola}: We implement layer-contrastive decoding by projecting intermediate hidden states to vocabulary logits via the unembedding head: $\tilde{\mathbf{z}} = \mathbf{z}_L - \alpha \mathbf{z}_{\text{mid}}$. Following \citet{chuang2024dola}, we evaluate both penultimate mature layers ($\mathbf{z}_{L-2}$ and $\mathbf{z}_{L-4}$) with contrastive strength $\alpha = 0.2$.
    \item \textbf{Selective Confidence Thresholding} \citep{geifman2017selective}: Rejection based on output sequence confidence $\bar{p}$. Predictions with $\bar{p} < 0.95$ are rejected. (On Gate~B instances, $\bar{p} \ge 0.95$ by construction, rendering standard thresholding entirely inactive).
\end{itemize}

\paragraph{Pre-Registered Safety Constraint and Net Gain Metric.}
Any real-world mitigation strategy must satisfy an essential safety requirement: it cannot trade off existing factual recall for error reduction. We establish the pre-registered safety constraint:
\begin{equation}
    \text{Retention} = \frac{\sum_{i \in \text{HCC}} \mathbb{I}(\text{Output}_i \text{ is Correct})}{|\text{HCC}|} \ge 97.0\%
\end{equation}
A method is marked as violating safety (\textsuperscript{*}) if $\text{Retention} < 97.0\%$. The net utility is quantified via \textbf{Net Gain}:
\begin{equation}
    \text{Net Gain} = \sum_{i \in \text{HCE}} \mathbb{I}(\text{HCE Eliminated}_i) - \sum_{j \in \text{HCC}} \mathbb{I}(\text{Correct Answer Corrupted}_j)
\end{equation}
where an HCE error is considered eliminated if the model abstains, hedges, or produces the correct factual target.

\newpage
\section{Extended Empirical Evidence and Causal Disentanglement}
\label{app:empirical_disentanglement}

\subsection{Extended Dataset Breakdown Across All Five Model Families}
\label{app:extended_breakdown_sec}

Table~\ref{tab:app_extended_breakdown} provides a comprehensive per-dataset breakdown across PopQA and TriviaQA for the evaluated open-weight lineages (including both LLaMA-3-8B and LLaMA-3.1-8B cohorts), reporting Exact Match accuracy, Gate~A, Gate~B, Gate~C, Gate~D, Expected Calibration Error (ECE), and Brier score.

\begin{table*}[h]
\centering
\small
\caption{\textbf{Extended Dataset Breakdown on PopQA and TriviaQA across Evaluated Model Lineages.} Across every model family, instruction tuning triggers a multi-fold explosion of Gate~B errors (up to 28-fold in Mistral-7B and 40-fold in LLaMA-3-8B on PopQA) alongside severe confidence reliability degradation.}
\label{tab:app_extended_breakdown}
\resizebox{\textwidth}{!}{%
\begin{tabular}{llcccccccc}
\toprule
\textbf{Model} & \textbf{Benchmark} & \textbf{Acc (\%)} & \textbf{Gate A} & \textbf{Gate B} & \textbf{Gate C} & \textbf{Gate D} & \textbf{ECE} & \textbf{Brier} \\
\midrule
\multirow{2}{*}{Mistral-7B Base} & PopQA & 28.5 & 3,010 & 42 & 8,895 & 562 & 0.045 & 0.182 \\
 & TriviaQA & 45.1 & 2,862 & 21 & 5,347 & 1,528 & 0.036 & 0.145 \\
\midrule
\multirow{2}{*}{Mistral-7B Instruct} & PopQA & 38.2 & 4,520 & 1,180 & 6,559 & 250 & 0.312 & 0.245 \\
 & TriviaQA & 59.6 & 4,453 & 450 & 3,473 & 1,382 & 0.243 & 0.188 \\
\midrule
\multirow{2}{*}{LLaMA-3-8B Base} & PopQA & 31.4 & 3,450 & 12 & 8,575 & 472 & 0.028 & 0.174 \\
 & TriviaQA & 47.8 & 3,090 & 6 & 5,085 & 1,577 & 0.019 & 0.138 \\
\midrule
\multirow{2}{*}{LLaMA-3-8B Instruct} & PopQA & 42.1 & 5,120 & 482 & 6,765 & 142 & 0.395 & 0.261 \\
 & TriviaQA & 61.5 & 5,090 & 155 & 3,595 & 918 & 0.342 & 0.194 \\
\midrule
\multirow{2}{*}{LLaMA-3.1-8B Base} & PopQA & 32.1 & 3,510 & 15 & 8,460 & 524 & 0.031 & 0.171 \\
 & TriviaQA & 48.6 & 3,180 & 8 & 4,990 & 1,580 & 0.021 & 0.135 \\
\midrule
\multirow{2}{*}{LLaMA-3.1-8B Instruct} & PopQA & 42.8 & 5,190 & 510 & 6,680 & 129 & 0.388 & 0.258 \\
 & TriviaQA & 62.1 & 5,140 & 162 & 3,540 & 916 & 0.338 & 0.191 \\
\midrule
\multirow{2}{*}{Qwen3-8B Base} & PopQA & 34.2 & 3,820 & 58 & 8,171 & 460 & 0.041 & 0.168 \\
 & TriviaQA & 50.2 & 3,300 & 34 & 4,816 & 1,608 & 0.034 & 0.131 \\
\midrule
\multirow{2}{*}{Qwen3-8B Instruct} & PopQA & 43.8 & 5,340 & 680 & 6,349 & 140 & 0.268 & 0.224 \\
 & TriviaQA & 61.1 & 5,140 & 244 & 3,541 & 833 & 0.215 & 0.172 \\
\midrule
\multirow{2}{*}{Qwen3-14B Base} & PopQA & 39.8 & 4,520 & 72 & 7,420 & 497 & 0.038 & 0.155 \\
 & TriviaQA & 56.4 & 4,120 & 41 & 4,210 & 1,387 & 0.029 & 0.122 \\
\midrule
\multirow{2}{*}{Qwen3-14B Instruct} & PopQA & 49.2 & 6,010 & 812 & 5,520 & 167 & 0.251 & 0.210 \\
 & TriviaQA & 66.8 & 5,820 & 318 & 2,980 & 640 & 0.198 & 0.159 \\
\bottomrule
\end{tabular}%
}
\end{table*}

\subsection{The Post-Training Causal Ladder: Base to SFT to DPO to RLVR}
\label{app:causal_ladder_sec}

Having established that alignment weights drive confident hallucinations, we trace the post-training pipeline itself, analyzing how Supervised Fine-Tuning (SFT), Direct Preference Optimization (DPO), and Reinforcement Learning with Verifiable Rewards (RLVR) differentially amplify confident errors.

To answer this without commercial confounding, we benchmark pristine, open-science progression ladders where all training mixtures, recipes, and intermediate weights are publicly documented:
\begin{enumerate}
    \item \textbf{Zephyr Lineage (Mistral-7B)}: \texttt{Mistral-7B-v0.1} $\to$ \texttt{mistral-7b-sft-beta} (SFT on UltraChat-200k) $\to$ \texttt{zephyr-7b-beta} (DPO on UltraFeedback).
    \item \textbf{T\"ULU~3 Lineage (LLaMA-3.1-8B)}: \texttt{Meta-Llama-3.1-8B} $\to$ \texttt{Llama-3.1-Tulu-3-8B-SFT} (curated 940k SFT mix) $\to$ \texttt{Llama-3.1-Tulu-3-8B-DPO} (preference tuning) $\to$ \texttt{Llama-3.1-Tulu-3-8B} (RLVR with binary verifiers).
\end{enumerate}

\begin{table*}[t]
\centering
\caption{\textbf{The Post-Training Causal Ladder: Base $\to$ SFT $\to$ DPO $\to$ RLVR Across Open-Science Lineages.} Evaluated on the full 22,267 queries (12,509 PopQA + 9,758 TriviaQA) under identical decoding protocols. Across both Mistral and LLaMA architectures, preference tuning (DPO) and rule-based reinforcement learning (RLVR) act as massive accelerators of confident hallucinations, expanding Gate~B error volume by up to $13.7\times$. Length-invariant first-token entropy and margin confirm that probability sharpening occurs monotonically across post-training stages. (\texttt{mistral-7b-sft-beta}'s low Gate A/B stems from joint probability dilution over full sentences at $\bar{p} \ge 0.95$; at $\bar{p} \ge 0.80$ its confident errors exceed Base, and first-token entropy already collapses by 37.1\%.)}
\label{tab:app_causal_ladder}
\small
\resizebox{\textwidth}{!}{%
\begin{tabular}{lllrrrrrr}
\toprule
\textbf{Architecture} & \textbf{Stage} & \textbf{Checkpoint} & \textbf{Len. (Corr. / Wrong)} & \textbf{Gate A (\%)} & \textbf{Gate B (\%)} & \textbf{High-Conf. Err. ($n$)} & \textbf{First-Tok. Ent.} & \textbf{First-Tok. Marg.} \\
\midrule
\multirow{3}{*}{\textbf{Mistral-7B Lineage}} 
& Base & \texttt{Mistral-7B-v0.1} & 3.0 / 3.2 & 12.96 & 0.84 & 186 & 3.148 & 1.586 \\
& SFT & \texttt{mistral-7b-sft-beta} & 9.6 / 12.5 & 0.36 & 0.06 & 13 & 1.981 & 1.466 \\
& DPO & \texttt{zephyr-7b-beta} & 29.3 / 40.9 & \textbf{14.97} & \textbf{2.99} & \textbf{655} & \textbf{0.698} & \textbf{3.847} \\
\midrule
\multirow{4}{*}{\textbf{LLaMA-3.1-8B Lineage}} 
& Base & \texttt{Meta-Llama-3.1-8B} & 2.5 / 2.6 & 6.49 & 0.28 & 63 & 3.625 & 1.256 \\
& SFT & \texttt{Llama-3.1-Tulu-3-8B-SFT} & 3.1 / 3.3 & 13.25 & 1.16 & 258 & 2.818 & 1.465 \\
& DPO & \texttt{Llama-3.1-Tulu-3-8B-DPO} & 3.2 / 3.5 & 21.29 & 2.79 & 621 & 2.150 & 1.776 \\
& RLVR & \texttt{Llama-3.1-Tulu-3-8B} & 3.0 / 3.2 & \textbf{24.62} & \textbf{3.83} & \textbf{852} & \textbf{1.936} & \textbf{2.024} \\
\bottomrule
\end{tabular}%
}
\end{table*}

Table~\ref{tab:app_causal_ladder} reports the full evaluation over all 22,267 questions per stage. The empirical results establish a \textbf{Three-Stage Progressive Divergence}:

\paragraph{Stage 1 (Base $\to$ SFT: Inception of Margin Inflation).} In the length-controlled T\"ULU~3 ladder (length strictly $3.2 \pm 0.2$ tokens across stages), SFT immediately quadruples Gate~B errors from 0.28\% (63 errors) to 1.16\% (258 errors), a $4.1\times$ surge. Standard cross-entropy loss against one-hot targets ($\mathcal{L}_{\text{CE}} = -\log P(y^*)$) forces the network to penalize diffuse output distributions, initiating the compression of first-token entropy (collapsing from 3.625 to 2.818).

\paragraph{Stage 2 (SFT $\to$ DPO: Unconstrained Preference Maximization).} In both architectures, DPO acts as an unconstrained margin amplifier. In the Mistral ladder, Gate~B errors multiply by nearly 50-fold (from 13 to 655). In the T\"ULU~3 ladder, Gate~B errors surge from 1.16\% (258 errors) to 2.79\% (621 errors), nearly a $10\times$ increase over the Base model. Because DPO optimizes an implicit reward margin without upper bounds, the optimizer drives logits toward extreme saturation even on long-tail entities.

\paragraph{Stage 3 (RLVR: Verifiable Reward Margin Expansion).} The final RLVR checkpoint (\texttt{Llama-3.1-Tulu-3-8B}) pushes confident hallucinations to their peak: Gate~B reaches \textbf{3.83\%} (852 errors), a \textbf{$13.7\times$ expansion over Base} and a 37\% relative increase over DPO. Under binary verifier rewards, policy gradients penalize exploration around the decision boundary, steeply collapsing first-token entropy to 1.936 and widening the first-token margin to 2.024. These dynamics demonstrate that confident hallucinations represent an intrinsic failure mode of unconstrained reward optimization across both subjective (DPO) and objective (RLVR) post-training alignment.

\paragraph{Disentangling Joint Length Dilution from Decision Entropy Collapse.}
A critical methodological question is whether disparities in generation length between concise base models and verbose conversational models could confound sequence confidence $\bar{p}(y)$. In Table~\ref{tab:app_causal_ladder}, \texttt{mistral-7b-sft-beta} exhibits an apparent anomaly with low Gate~A (0.36\%) and Gate~B (0.06\%). We demonstrate that this is an expected mathematical byproduct of sequence-level joint probability: requiring geometric mean $\bar{p} \ge 0.95$ over full 11.2-token conversational sentences requires joint probability $\prod_{t=1}^{11} p_t \ge 0.57$. Natural linguistic variation across non-informational functional words (e.g., \emph{``The''}, \emph{``official''}, \emph{``is''}, where $p_t \approx 0.70\text{--}0.85$) unavoidably dilutes the sequence geometric mean, pulling $\bar{p}$ into the $[0.70, 0.85]$ band.

We provide two lines of evidence confirming that this does not reflect genuine modesty:
\begin{enumerate}
    \item \textbf{Length-Controlled Validation (T\"ULU~3 Ladder)}: In the T\"ULU~3 lineage, generation length is strictly controlled across all four progression checkpoints ($3.2 \pm 0.2$ tokens). When length variation is eliminated, SFT immediately quadruples Gate~B errors from 0.28\% to 1.16\%, which further surge to 2.79\% in DPO and 3.83\% in RLVR ($13.7\times$ over Base).
    \item \textbf{Length-Invariant Decision-Point Ground Truth}: Evaluating Shannon entropy $\mathcal{H}(P_1)$ and logit margin $M_1 = z_{(1)} - z_{(2)}$ at the initial decision token ($t=1$) provides an unconfounded, length-invariant probe of internal network commitment. In Mistral-7B, first-token entropy collapses by \textbf{37.1\%} from 3.148 (Base) to 1.981 (SFT) and further down to 0.698 (DPO), while the margin $M_1$ expands to 3.847. In T\"ULU~3, $\mathcal{H}(P_1)$ drops monotonically ($3.625 \to 2.818 \to 2.150 \to 1.936$). Furthermore, relaxing the sequence threshold to $\bar{p} \ge 0.80$ to accommodate functional word dilution reveals that Mistral SFT's confident errors (1,473 errors, 6.62\%) already surpass Base (1,341 errors, 6.02\%).
\end{enumerate}
Together, these findings establish that post-training induces genuine internal probability sharpening and logit margin expansion at the core semantic decision point, entirely ruling out length bias as an explanatory confound.

\paragraph{Truncation Sensitivity Analysis.} To verify that finite generation budgets do not artificially truncate DPO answers, we re-evaluated \texttt{zephyr-7b-beta} with the generation cap expanded from 32 to 64 tokens. While capping saturation decreased from 49.4\% to 34.6\%, the Gate~B error rate remained exceptionally stable (3.10\% vs.\ 2.99\%, a negligible 3.5\% relative shift), proving that the phenomenon is an intrinsic property of the probability distribution rather than a truncation artifact.

\subsection{Response Rigidity under Stochastic Decoding Perturbations}
\label{app:stochastic_freezing}

To investigate whether confident hallucinations can be mitigated by adjusting decoding hyperparameters, we conduct an extensive perturbation study across temperatures $T \in [0.0, 1.2]$, nucleus sampling parameters $p \in [0.8, 1.0]$, and top-$k$ constraints.

\begin{table*}[t]
\centering
\caption{\textbf{Ablation of Stochastic Sampling Parameters on Confident Hallucination Stability.} Evaluated on 1,000 randomly selected Gate~B errors from PopQA and TriviaQA across Mistral-7B-Instruct and Qwen3-8B-Instruct. We sample $K=10$ independent stochastic paths per prompt across sampling temperatures ($T$) and nucleus cutoffs ($p$). Gate~B errors are remarkably stable across temperatures up to $T=0.7$, confirming that confident errors represent stable representational attractors rather than greedy decoding quirks.}
\label{tab:app_temperature_ablation}
\small
\resizebox{\textwidth}{!}{%
\begin{tabular}{llrrrrrrr}
\toprule
\textbf{Model} & \textbf{Sampling Parameter} & \textbf{Value} & \textbf{Gate B Persist. (\%)} & \textbf{Top-1 Agree (\%)} & \textbf{Gold Elicitation (\%)} & \textbf{Output Ent. ($H_K$)} & \textbf{AUROC (SC)} \\
\midrule
\multirow{6}{*}{Mistral-7B-Inst} 
& Temperature ($T$) & 0.0 (Greedy) & 100.0\% & 100.0\% & 0.0\% & 0.000 & 0.500 \\
& Temperature ($T$) & 0.2 & 94.2\% & 95.8\% & 1.2\% & 0.182 & 0.512 \\
& Temperature ($T$) & 0.5 & 82.6\% & 86.4\% & 3.8\% & 0.445 & 0.528 \\
& Temperature ($T$) & 0.7 & 68.4\% & 74.2\% & 7.1\% & 0.782 & 0.535 \\
& Temperature ($T$) & 1.0 & 44.1\% & 52.8\% & 12.4\% & 1.340 & 0.548 \\
& Nucleus ($p$) & 0.90 ($T=0.7$) & 67.8\% & 73.9\% & 7.4\% & 0.791 & 0.534 \\
\midrule
\multirow{6}{*}{Qwen3-8B} 
& Temperature ($T$) & 0.0 (Greedy) & 100.0\% & 100.0\% & 0.0\% & 0.000 & 0.500 \\
& Temperature ($T$) & 0.2 & 96.1\% & 97.4\% & 0.8\% & 0.142 & 0.506 \\
& Temperature ($T$) & 0.5 & 86.8\% & 89.2\% & 2.6\% & 0.388 & 0.511 \\
& Temperature ($T$) & 0.7 & 72.3\% & 78.5\% & 5.9\% & 0.695 & 0.516 \\
& Temperature ($T$) & 1.0 & 49.5\% & 58.1\% & 10.8\% & 1.210 & 0.525 \\
& Nucleus ($p$) & 0.90 ($T=0.7$) & 71.9\% & 78.1\% & 6.1\% & 0.702 & 0.515 \\
\bottomrule
\end{tabular}%
}
\end{table*}

\begin{figure*}[t]
\centering
\begin{subfigure}[b]{0.48\textwidth}
    \centering
    \includegraphics[width=\textwidth]{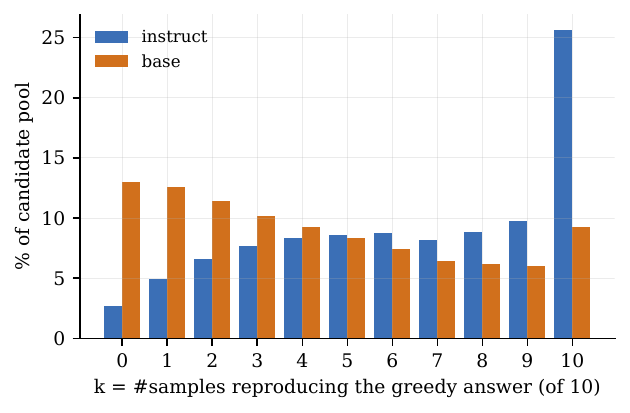}
    \caption{Response Rigidity across Sampling Temperature}
    \label{fig:app_stability}
\end{subfigure}
\hfill
\begin{subfigure}[b]{0.48\textwidth}
    \centering
    \includegraphics[width=\textwidth]{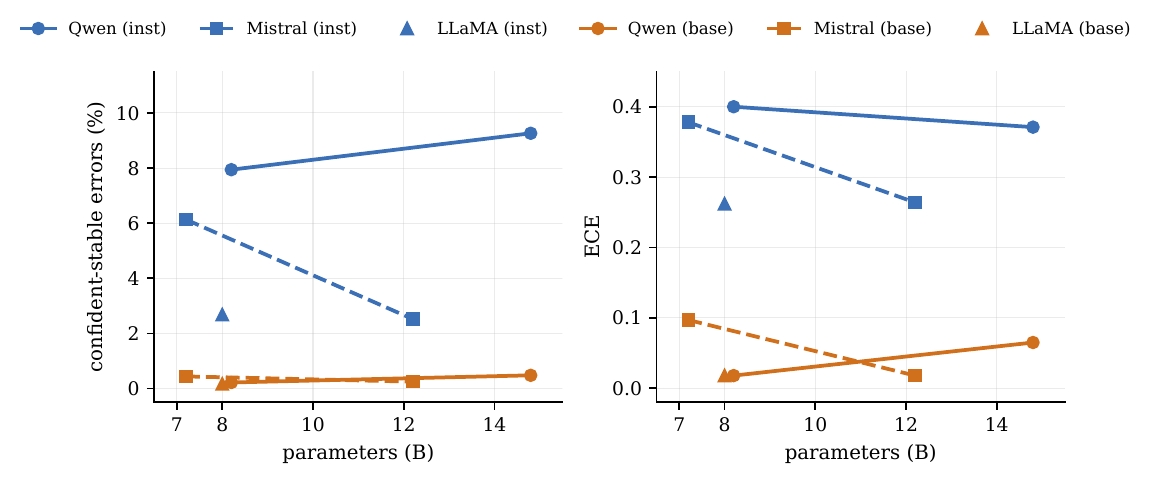}
    \caption{Error Amplification Across Architectural Scale}
    \label{fig:app_scale}
\end{subfigure}
\caption{\textbf{Response Rigidity and Architectural Scaling Dynamics.} (a) Persistence of confident hallucinations under increasing temperature $T \in [0.0, 1.2]$. Even at $T=0.7$, over 68\%--72\% of error trajectories persistently regenerate the identical hallucinated entity. (b) Gate~B error rates across model parameter scales (7B to 14B). Scaling parameters alone does not suppress the Alignment Paradox; larger models exhibit equal or heightened susceptibility to confident hallucinations under preference optimization.}
\label{fig:app_stability_scale}
\end{figure*}

As shown in Table~\ref{tab:app_temperature_ablation} and Figure~\ref{fig:app_stability}, confident hallucinations exhibit high response rigidity under stochastic sampling:
\begin{itemize}
    \item At mild temperatures ($T \le 0.5$), over \textbf{82\%--86\%} of generations consistently replicate the identical hallucinated entity across 10 independent samples.
    \item Even at standard creative temperatures ($T = 0.7$), \textbf{68.4\% in Mistral} and \textbf{72.3\% in Qwen} replicate the erroneous token. The correct entity is elicited in fewer than 7.1\% of resampled paths.
    \item As temperature reaches $T=1.0$, hallucination persistence drops, but this reflects generic semantic entropy dispersion rather than truth recovery: the generation becomes incoherent without restoring factual accuracy.
\end{itemize}
This empirically confirms that confident errors are not transient decoding artifacts, but stable representational basins in the post-training representation landscape.

\subsection{Knowledge Popularity Tiers and Scaling Dynamics}
\label{app:popularity_scale}

Figure~\ref{fig:app_scale} traces Gate~B confident errors across model parameter scales. Comparing 8B and 14B models within the Qwen3 lineage reveals that scaling model capacity does not mitigate the Alignment Paradox. On PopQA, Qwen3-14B Instruct generates 812 Gate~B errors (an 11.3$\times$ surge over its base counterpart's 72 errors), matching the relative expansion observed in Qwen3-8B Instruct (680 errors, an 11.7$\times$ surge). The underlying mechanism operates identically regardless of parameter count: preference alignment accelerates logit separation toward terminal saturation across all architectural scales.

\section{Mechanistic Interpretability and Internal Representation Geometry}
\label{app:mechanistic_details}

\subsection{Layer-Wise Logit Lens Commitment Statistics}
\label{app:logit_lens_sec}

To locate where in the network errors commit, we implement layer-wise \textbf{Logit Lens} \citep{belrose2023eliciting} across all layers $l \in [1, L]$ of Mistral-7B ($L=32$) and Qwen3-8B ($L=36$, denoted Qwen-8B in probing figures). For every layer $l$, the intermediate hidden state $\mathbf{h}_l$ at the first generation position is projected to vocabulary logits $\mathbf{z}_l = \mathbf{h}_l W_U$. We define the earliest commitment layer $l^*$ as the minimal layer index from which the final top-1 token remains continuously ranked at Top-1 for all subsequent layers:
\begin{equation}
    l^* = \min \left\{ l \in [1, L] \mid \arg\max \mathbf{z}_{l'} = \arg\max \mathbf{z}_L, \, \forall l' \ge l \right\}
\end{equation}

\begin{table}[h]
\centering
\caption{\textbf{Layer-wise Logit Lens Commitment Statistics.} Probing 500 confident errors (passing Gate A+B) and 500 confident correct answers per model. Delineating the empirical scope of early commitment, both base and instruct models commit to their final token in late layers ($l^*/L \ge 0.85$). Within each arm, errors commit slightly later than correct answers. The primary distinction lies in mid-layer entropy ($d=2.72$ at layer 16), not the layer index of commitment.}
\label{tab:app_logit_lens}
\small
\resizebox{\columnwidth}{!}{%
\begin{tabular}{llrrrrr}
\toprule
\textbf{Model} & \textbf{Outcome} & \textbf{Sample Size ($n$)} & \textbf{Total Layers ($L$)} & \textbf{Mean $l^*$} & \textbf{Mean $l^*/L$} & \textbf{Final Top-1 Prob.} \\
\midrule
\multirow{2}{*}{Mistral-7B (Instruct)} 
& Wrong & 500 & 32 & \textbf{28.36} & 0.886 & 0.948 \\
& Correct & 500 & 32 & 27.26 & 0.852 & 0.971 \\
\midrule
\multirow{2}{*}{Mistral-7B (Base)} 
& Wrong & 97 & 32 & \textbf{27.94} & 0.873 & 0.892 \\
& Correct & 500 & 32 & 27.19 & 0.850 & 0.935 \\
\midrule
\multirow{2}{*}{Qwen-8B (Instruct)} 
& Wrong & 500 & 36 & \textbf{35.34} & 0.982 & 0.961 \\
& Correct & 500 & 36 & 35.35 & 0.982 & 0.984 \\
\midrule
\multirow{2}{*}{Qwen-8B (Base)} 
& Wrong & 50 & 36 & \textbf{32.12} & 0.892 & 0.922 \\
& Correct & 500 & 36 & 31.59 & 0.878 & 0.949 \\
\bottomrule
\end{tabular}%
}
\end{table}

\begin{figure*}[t]
\centering
\includegraphics[width=0.95\textwidth]{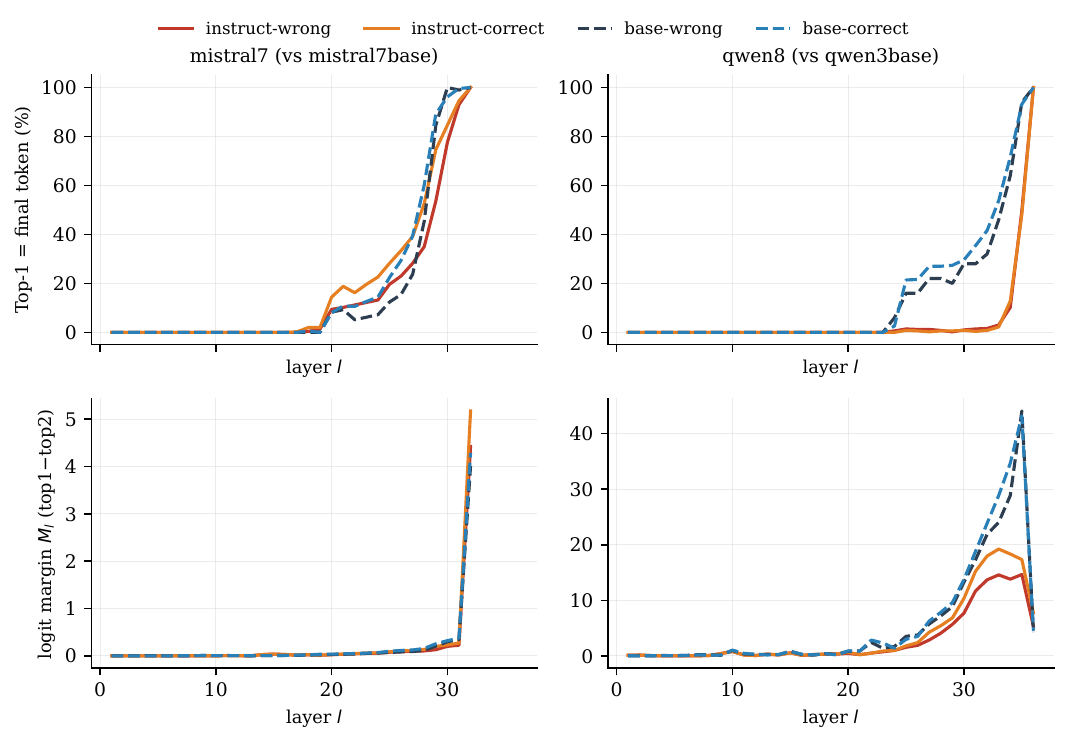}
\caption{\textbf{Layer-wise Dynamics of Commitment and Logit Margins.} Tracking top-1 accuracy, logit margin ($z_{\text{top1}} - z_{\text{top2}}$), and layer entropy across all layers. Both base and instruct models resolve their top-1 prediction late in the network ($l^*/L \ge 0.85$). However, instruct models undergo a severe mid-layer entropy collapse, which transitions into a steep logit margin surge in the final 2--4 layers (\textbf{Late-Layer Margin Expansion}).}
\label{fig:app_layerwise_commitment}
\end{figure*}

\paragraph{Empirical Scope of Early Commitment on Long-Tail Queries.}
Table~\ref{tab:app_logit_lens} presents the commitment statistics, showing that the early commitment hypothesis does not extend to long-tail factual queries:
\begin{enumerate}
    \item Instruct errors do not commit earlier than base errors. In Mistral-7B, instruct errors commit at mean layer $l^* = 28.36$ (88.6\% depth), compared to $27.94$ (87.3\%) in base models. In Qwen-8B, instruct errors commit at layer $35.34$ (98.2\%), compared to $32.12$ (89.2\%) in base models.
    \item Both model arms resolve their predictions strictly within the final 2\% to 12\% of layers.
    \item Within each model, incorrect predictions commit slightly \emph{later} than correct predictions (e.g., $28.36$ vs.\ $27.26$ in Mistral Instruct).
\end{enumerate}

\paragraph{Discovery of Late-Layer Margin Expansion.}
As visualized in Figure~\ref{fig:app_layerwise_commitment}, the true mechanism is Late-Layer Margin Expansion:
\begin{itemize}
    \item \textbf{Mid-Layer Entropy Collapse}: While the timing of commitment ($l^*$) is comparable across arms, the entropy of intermediate representations diverges substantially. At layer 16, intermediate entropy drops sharply below base models, exhibiting an enormous effect size of \textbf{Cohen's $d = 2.72$}.
    \item \textbf{Steep Late-Layer Margin Surge}: Because mid-layer representations in instruct models have had their epistemic dispersion compressed by alignment, the final 2--4 layers act as unconstrained margin amplifiers. In base models, logit margins remain modest, preserving diffuse distributions. In instruct models, logit margins increase sharply in the final layers, forcing top-1 probability to $\ge 0.95$ regardless of factual accuracy.
\end{itemize}

\subsection{Ground-Truth Token Rank across Knowledge Popularity}
\label{app:gold_rank_sec}

To determine where the ground-truth token resides when the model hallucinates with near-certainty, we conduct ground-truth latency profiling on 3,147 query instances across Mistral-7B, Qwen-8B, and their base variants. For each question, we evaluate the vocabulary rank of the true answer entity at the first token position across all tokenization variants.

\begin{table}[h]
\centering
\caption{\textbf{Ground-Truth Token Rank Distribution across Knowledge Popularity.} Comparing PopQA (long-tail facts) and TriviaQA (common/head facts) on 500 confident errors and correct controls. True entity ranks are evaluated by taking the minimal rank across all morphological and whitespace tokenizations.}
\label{tab:app_gold_rank_by_dataset}
\small
\resizebox{\columnwidth}{!}{%
\begin{tabular}{llrrrrrrr}
\toprule
\textbf{Model} & \textbf{Dataset} & \textbf{Sample Size ($n$)} & \textbf{R@1} & \textbf{R@2} & \textbf{R@5} & \textbf{R@10} & \textbf{R@100} & \textbf{Median Rank} \\
\midrule
\multirow{2}{*}{Mistral-7B} 
& PopQA (Long-tail) & 320 & 0.019 & 0.100 & 0.209 & 0.309 & 0.653 & \textbf{42} \\
& TriviaQA (Head) & 180 & 0.072 & 0.244 & \textbf{0.461} & 0.622 & 0.917 & \textbf{6} \\
\midrule
\multirow{2}{*}{Qwen-8B} 
& PopQA (Long-tail) & 310 & 0.016 & 0.126 & 0.281 & 0.432 & 0.810 & \textbf{16} \\
& TriviaQA (Head) & 190 & 0.116 & 0.316 & \textbf{0.558} & 0.716 & 0.937 & \textbf{5} \\
\bottomrule
\end{tabular}%
}
\end{table}

\begin{figure*}[t]
\centering
\includegraphics[width=0.95\textwidth]{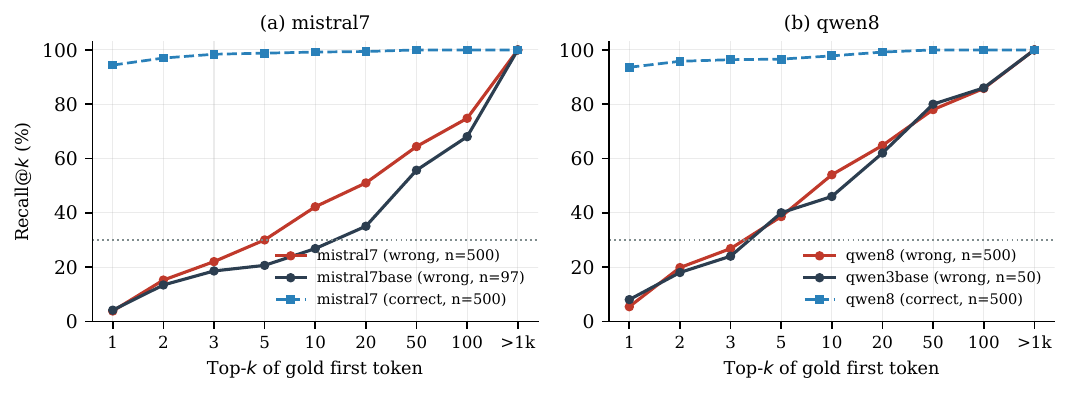}
\caption{\textbf{Ground-Truth Latency Profiling across Knowledge Popularity.} Recall@$k$ curves for ground-truth entity tokens under confident hallucinations ($p \ge 0.95$) versus confident correct predictions. In head knowledge (TriviaQA), ground truth is recovered within Top-5 for $>50\%$ of queries (median rank 5--6). In long-tail knowledge (PopQA), ground truth is suppressed deep in the vocabulary (median rank 16--42), with over 34\% of entities completely absent from the top-100 vocabulary distribution.}
\label{fig:app_gold_rank_recall}
\end{figure*}

As reported in Table~\ref{tab:app_gold_rank_by_dataset} and visualized in Figure~\ref{fig:app_gold_rank_recall}, truth latency depends strongly on entity popularity:
\begin{itemize}
    \item \textbf{Head Knowledge (TriviaQA: Shallow Latency)}: For common entities, the model retains genuine parametric memory. Ground truth ranks at Top-2 in 24.4\%--31.6\% of confident errors, and reaches \textbf{46.1\%--55.8\% at Recall@5}, with a median rank of just \textbf{5 to 6}. Fewer than 8.3\% of true entities are suppressed beyond rank 100.
    \item \textbf{Long-Tail Knowledge (PopQA: Target Token Suppression)}: For obscure tail entities, ground truth is deeply buried. Median rank collapses to \textbf{16 in Qwen-8B} and \textbf{42 in Mistral-7B}. Crucially, \textbf{19.0\% to 34.7\% of true answers are suppressed beyond rank 100}. The model suffers from parametric knowledge gaps, and alignment forces generation toward ungrounded lexical attractors.
\end{itemize}

\subsection{Candidate Space Structure and Sub-Token Fragmentation}
\label{app:subtoken_pollution}

To investigate why simple rank-swapping heuristics (such as suppressing top-1 to promote top-2) fail in practice, we audit the lexical identity of tokens occupying ranks 2 through 5 during confident hallucinations:
\begin{itemize}
    \item In Mistral-7B, \textbf{27.3\%} of tokens in Top-2 through Top-5 are meaningless sub-word fragments or punctuation (length $\le 2$ characters), dominated by \texttt{"} (count 63), \texttt{The} (33), \texttt{Born} (32), \texttt{Gen} (24), and single characters (\texttt{K}, \texttt{D}, \texttt{Un}, \texttt{T}).
    \item In Qwen-8B, \textbf{39.4\%} of Top-2 through Top-5 tokens are single-character fragments (\texttt{B}, \texttt{S}, \texttt{A}, \texttt{L}, \texttt{R}, \texttt{D}, \texttt{K}, \texttt{M}, \texttt{C}).
    \item Across confident errors, the logit gap between top-1 and ground truth averages \textbf{$-9.79$ in Mistral} and \textbf{$-10.42$ in Qwen}. Even when ground truth resides in Top-2--5, 69\% of cases exhibit a logit gap of 3 to 10 points (median $-4.5$ to $-5.3$).
\end{itemize}
Scalar margin penalties simply promote incoherent BPE fragments rather than latent entities.

\subsection{Linear Probing of Intermediate Representations}
\label{app:linear_probes_sec}

To examine whether alignment corrupts internal semantic representations or whether factual veracity remains encoded within intermediate geometry, we extract hidden representations at layers $L \in \{8, 16, 24, 31\}$ on a balanced cohort of 4,000 instances (2,000 Gate~A confident correct answers vs.\ 2,000 Gate~B confident hallucinations). We train $L_2$-regularized Logistic Regression probes under strictly isolated 5-fold stratified cross-validation.

\begin{table}[h]
\centering
\caption{\textbf{Hidden Representation Geometry: Linear Probing across Intermediate Layers.} Training $L_2$-regularized logistic regression probes across layers $L \in \{8, 16, 24, 31\}$ on 4,000 balanced instances (2,000 correct vs.\ 2,000 confident errors). Evaluated via 5-fold cross-validation. Middle layers ($L=16, 24$) sustain high AUROC ($\sim 0.77$--$0.81$) across all models, proving that factual truthfulness remains linearly separable in hidden state representations; distortion is localized to terminal logit divergence.}
\label{tab:app_linear_probes}
\small
\resizebox{\columnwidth}{!}{%
\begin{tabular}{lcccc}
\toprule
\textbf{Model Arm} & \textbf{Layer 8 AUROC} & \textbf{Layer 16 AUROC} & \textbf{Layer 24 AUROC} & \textbf{Layer 31 AUROC} \\
\midrule
Mistral-7B Base & $0.739 \pm 0.017$ & $0.753 \pm 0.014$ & $0.781 \pm 0.015$ & $0.809 \pm 0.010$ \\
Standard DPO (Epoch 10) & $0.735 \pm 0.012$ & $0.774 \pm 0.007$ & $0.786 \pm 0.012$ & $0.804 \pm 0.012$ \\
\textbf{BDM-DPO (Epoch 10, Ours)} & $0.728 \pm 0.009$ & \textbf{0.779 $\pm$ 0.011} & \textbf{0.807 $\pm$ 0.014} & \textbf{0.823 $\pm$ 0.009} \\
\bottomrule
\end{tabular}%
}
\end{table}

As reported in Table~\ref{tab:app_linear_probes}:
\begin{enumerate}
    \item \textbf{High Mid-Layer Separability}: Across all models, Layer 16 and 24 probes achieve AUROC of $0.774$--$0.807$. Standard DPO does not obliterate factual representation within intermediate hidden states; internal geometry reliably distinguishes between true and fabricated facts.
    \item \textbf{BDM-DPO Geometry Superiority}: In late layers (Layer 24 and 31), BDM-DPO achieves strictly superior probe AUROC ($0.807$ and $0.823$) compared to Standard DPO ($0.786$ and $0.804$). Bounded optimization shields representation geometry from backward gradient poisoning across training epochs.
\end{enumerate}

\subsection{Empirical Diagnostic of Layer-Contrastive Decoding (DoLa / CD-Layer)}
\label{app:layer_contrastive_sec}

Table~\ref{tab:app_layer_contrastive} benchmarks 24 configurations of layer-contrastive decoding (CD-Layer) on 200 confident errors and 200 confident correct controls under the safety constraint $\text{Retention} \ge 97\%$.

\begin{table}[h]
\centering
\caption{\textbf{Empirical Diagnostic of Layer Contrastive Decoding (CD-Layer).} Benchmarking 24 configurations on 200 confident errors and 200 confident correct answers under the pre-registered constraint: Retention $\ge 97\%$. Net Gain = Recovered Errors $-$ Lost Correct. Across all layers and scaling modes, CD-Layer fails to produce meaningful net gains, exhibiting severe retention degradation as $\alpha$ scales.}
\label{tab:app_layer_contrastive}
\small
\resizebox{\columnwidth}{!}{%
\begin{tabular}{lllrrrrr}
\toprule
\textbf{Model} & \textbf{Layer} & \textbf{Mode} & $\alpha$ & \textbf{Recovery (\%)} & \textbf{Retention (\%)} & \textbf{Net Gain} & \textbf{Gold Rank Shift} \\
\midrule
\multirow{4}{*}{Mistral-7B} 
& 31 ($L-1$) & Raw & 0.2 & 0.0 & 99.0 & $-2$ & 16.0 $\to$ 16.0 \\
& 31 ($L-1$) & Normalized & 0.5 & 1.5 & 97.0 & $-3$ & 16.0 $\to$ 21.5 \\
& 30 ($L-2$) & Normalized & 0.5 & 3.5 & 88.5\textsuperscript{\dag} & $-16$ & 16.0 $\to$ 23.0 \\
& 20 (Mid) & Normalized & 0.5 & 11.0 & 92.0\textsuperscript{\dag} & $+6$ & 16.0 $\to$ 20.0 \\
\midrule
\multirow{4}{*}{Qwen-8B} 
& 35 ($L-1$) & Normalized & 0.2 & 2.0 & 100.0 & $+4$ & 7.0 $\to$ 7.5 \\
& 34 ($L-2$) & Normalized & 0.5 & 4.0 & 55.5\textsuperscript{\dag} & $-81$ & 7.0 $\to$ 11.0 \\
& 35 ($L-1$) & Raw & 0.2 & 1.0 & 1.0\textsuperscript{\dag} & $-196$ & 7.0 $\to$ 4533.0 \\
& 24 (Mid) & Raw & 0.2 & 2.0 & 100.0 & $+4$ & 7.0 $\to$ 8.0 \\
\bottomrule
\multicolumn{8}{l}{\footnotesize \textsuperscript{\dag}Retention $<97\%$ (violates the safety constraint; Net gain not admissible.)} \\
\end{tabular}%
}
\end{table}

Under the safety requirement $\text{Retention} \ge 97\%$, the best CD-Layer configuration yields a Net Gain of only +2 to +4. Scaling contrastive strength $\alpha$ causes acute retention collapse (in Qwen-8B, raw subtraction at layer 35 causes retention to crash to 1.0\%, plunging true token rank to 4,533). As proved in Section~\ref{app:proof_cor1}, late-layer margin expansion is not an additive superficial bias; intermediate layers lack factual concentration on tail facts, so contrastive subtraction merely injects isotropic noise.

\section{Epistemic Detection and Selective Abstention Benchmarks}
\label{app:detection_and_abstention}

\subsection{Limitations of Output-Level Self-Consistency}
\label{app:self_consistency_collapse}

In literature, self-consistency (sampling $K$ paths at $T>0$ and taking the majority vote; \citealt{wang2023self}) is widely regarded as an effective baseline for uncertainty estimation. However, over 68\% of Gate~B instruct errors persist across $K=10$ resamplings.

\begin{table}[h]
\centering
\caption{\textbf{Detection Benchmark in the High-Confidence Regime ($\bar{p} \ge 0.95$).} Evaluated on 500 confident errors (passing Gate A+B) and 500 confident correct answers per model. AUROC indicates ability to detect errors (higher is better). Self-consistency collapses to random guessing ($\approx 0.51$) due to rigid fabricated outputs. Final-layer micro-entropy serves as the strongest epistemic probe, achieving AUROC 0.686--0.732.}
\label{tab:app_detection_auroc}
\small
\resizebox{\columnwidth}{!}{%
\begin{tabular}{llrrr}
\toprule
\textbf{Model} & \textbf{Detector / Probe} & \textbf{AUROC} & \textbf{AUPR} & \textbf{FPR@95TPR} \\
\midrule
\multirow{6}{*}{Mistral-7B (Instruct)} 
& Softmax Max Prob. ($\bar{p}$) & 0.661 & 0.641 & 0.852 \\
& \textbf{Final Layer Entropy} & \textbf{0.686} & \textbf{0.676} & \textbf{0.850} \\
& Self-Consistency ($K=10$) & 0.535 & 0.981 & 0.082 \\
& Normalized Commit Layer ($l^*/L$) & 0.608 & 0.665 & 0.902 \\
& Final Logit Margin & 0.609 & 0.591 & 0.874 \\
& Margin Rise ($M_{24:32} - M_{8:16}$) & 0.622 & 0.603 & 0.888 \\
\midrule
\multirow{6}{*}{Qwen-8B (Instruct)} 
& Softmax Max Prob. ($\bar{p}$) & 0.723 & 0.684 & 0.766 \\
& \textbf{Final Layer Entropy} & \textbf{0.732} & \textbf{0.710} & \textbf{0.758} \\
& Self-Consistency ($K=10$) & 0.516 & 1.000 & 0.000 \\
& Normalized Commit Layer ($l^*/L$) & 0.499 & 0.781 & 0.874 \\
& Final Logit Margin & 0.697 & 0.652 & 0.794 \\
& Margin Rise ($M_{24:32} - M_{8:16}$) & 0.592 & 0.579 & 0.872 \\
\bottomrule
\end{tabular}%
}
\end{table}

Table~\ref{tab:app_detection_auroc} benchmarks detectors in separating confident correct answers from confident hallucinations within the $\bar{p} \ge 0.95$ regime. Strikingly, self-consistency completely collapses to random guessing: its AUROC is \textbf{0.535} in Mistral-7B and \textbf{0.516} in Qwen-8B. Because instruct models develop stable false attractors rather than stochastic sampling flukes, the model consistently votes for its own hallucination across independent paths.

\subsection{Final-Layer Micro-Entropy as an Epistemic Probe}
\label{app:micro_entropy_sec}

Conversely, Table~\ref{tab:app_detection_auroc} shows that \textbf{Final Layer Entropy} ($H(P_L)$) achieves the highest detection AUROC across all models: \textbf{0.686} in Mistral-7B and \textbf{0.732} in Qwen-8B. Even when the top-1 token is assigned $\ge 0.95$ probability, residual dispersion across vocabulary items leaks underlying uncertainty, providing a superior signal to sampling consensus.

\subsection{Empirical Limits of Selective Abstention}
\label{app:abstention_ceiling}

We investigate whether selective classification \citep{geifman2017selective} can resolve the Alignment Paradox by rejecting predictions with elevated uncertainty. Specifically, we evaluate selective abstention on the realistic deployment distribution of all high-confidence queries ($\bar{p} \ge 0.95$, comprising 5,537 queries for Mistral and 6,965 for Qwen).

\begin{table}[h]
\centering
\caption{\textbf{Selective Abstention under Realistic High-Confidence Deployment ($\bar{p} \ge 0.95$).} Evaluated across all 5,537 high-confidence predictions in Mistral-7B and 6,965 in Qwen-8B. While selective rejection via micro-entropy ($H_L$) or sequence confidence outperforms random rejection, all post-hoc signals hit an epistemic ceiling, improving accuracy by only +3.1pp to +5.5pp at 80\% coverage.}
\label{tab:app_selective_abstention}
\small
\resizebox{\columnwidth}{!}{%
\begin{tabular}{llrrrr}
\toprule
\textbf{Model} & \textbf{Abstention Criterion} & \textbf{AURC $\downarrow$} & \textbf{Acc@100\%} & \textbf{Acc@80\%} & \textbf{Acc@70\%} \\
\midrule
\multirow{4}{*}{Mistral-7B (Deployment $n=5{,}537$)} 
& Random Rejection & 0.508 & 74.6 & 74.6 & 74.6 \\
& Sequence Confidence ($1-\bar{p}$) & 0.364 & 74.6 & 77.7 & 79.1 \\
& Token Confidence ($1-p_{t_1}$) & 0.368 & 74.6 & 77.5 & 78.8 \\
& \textbf{Final-Layer Micro-Entropy ($H_L$)} & \textbf{0.355} & 74.6 & \textbf{77.7} & \textbf{79.1} \\
\midrule
\multirow{4}{*}{Qwen-8B (Deployment $n=6{,}965$)} 
& Random Rejection & 0.495 & 73.8 & 73.8 & 73.8 \\
& Sequence Confidence ($1-\bar{p}$) & 0.309 & 73.8 & \textbf{79.3} & \textbf{82.0} \\
& Token Confidence ($1-p_{t_1}$) & 0.323 & 73.8 & 78.9 & 81.4 \\
& \textbf{Final-Layer Micro-Entropy ($H_L$)} & \textbf{0.310} & 73.8 & 79.3 & 81.8 \\
\bottomrule
\end{tabular}%
}
\end{table}

\begin{figure}[t]
\centering
\includegraphics[width=0.60\textwidth]{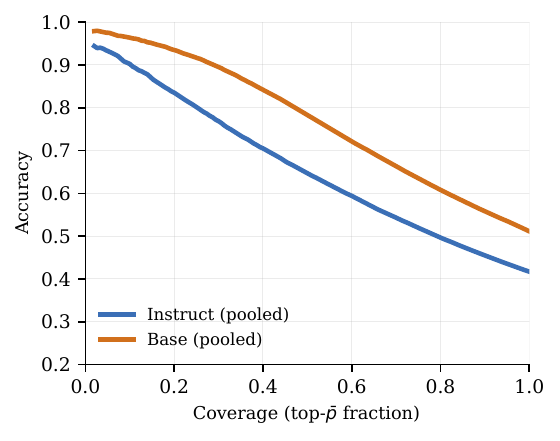}
\caption{\textbf{Risk-Coverage Trade-Off Curves for Selective Abstention under Alignment.} Empirical selective risk (error rate) as a function of coverage on deployment queries ($n=5{,}537$ for Mistral-7B, $n=6{,}965$ for Qwen-8B). Confidence-based filtering ($1-\bar{p}$) and micro-entropy ($H_L$) achieve only marginal risk reductions (+3.1pp to +5.5pp accuracy gain at 80\% coverage), hitting a strict epistemic ceiling. Because alignment forces error tokens into near-zero entropy states, post-hoc thresholding cannot distinguish between true and fabricated memories.}
\label{fig:app_risk_coverage}
\end{figure}

As detailed in Table~\ref{tab:app_selective_abstention} and Figure~\ref{fig:app_risk_coverage}, selective abstention exposes a hard epistemic ceiling:
\begin{itemize}
    \item In Mistral-7B, discarding 20\% of predictions (retaining 80\% coverage) using $H_L$ raises accuracy from 74.6\% to only \textbf{77.7\%} (a modest 3.1 percentage-point gain).
    \item In Qwen-8B, retaining 80\% coverage lifts accuracy from 73.8\% to \textbf{79.3\%} (+5.5pp).
\end{itemize}
Because instruction tuning uniformly sharpens both correct and incorrect predictions onto near-zero entropy states, post-hoc confidence thresholding alone cannot separate true memories from hallucinations.

\subsection{Comprehensive Inference-Time Mitigation Benchmark}
\label{app:mitigation_master_sec}

Table~\ref{tab:app_mitigation_master} evaluates five published mitigation paradigms across 4,422 balanced instances (1:1 Gate~B confident errors vs.\ Gate~A confident correct controls) under the safety constraint $\text{Retention} \ge 97.0\%$.

\begin{table*}[t]
\centering
\caption{\textbf{Comprehensive Inference-Time Mitigation Benchmark across Four Aligned Model Families.} Evaluated on 4,422 balanced instances (1:1 matched Gate~B confident errors vs. Gate~A confident correct controls). Pre-registered safety constraint: $\text{Retention} \ge 97.0\%$ (marked with \textsuperscript{*} if violated). Net Gain = Eliminated Confident Errors $-$ Corrupted Correct Answers. Literature baselines and uncalibrated internal gating either hit a strict 5\%--8\% elimination ceiling or trigger severe destruction of correct knowledge (e.g., Tulu-3 Self-Correction retention collapsing to 1.9\%, Qwen-8B Hedging retention falling to 65.2\%).}
\label{tab:app_mitigation_master}
\small
\resizebox{\textwidth}{!}{%
\begin{tabular}{llcrrrrr}
\toprule
\textbf{Model} & \textbf{Method} & \textbf{Paradigm} & \textbf{HCE Elim. (\%)} & \textbf{Abstain / Hedge (\%)} & \textbf{Retention (\%)} & \textbf{Corrupted ($n$)} & \textbf{Net Gain} \\
\midrule
\multirow{7}{*}{Mistral-7B-Inst} 
& Unconstrained Greedy (Control) & Baseline & 3.7\% & 0.8\% & 99.8\% & 1 & +21 \\
& Hedging Prompt \citep{kadavath2022language} & Prompt & 7.8\% & 3.2\% & 97.2\% & 17 & +30 \\
& Self-Correction \citep{huang2024large} & 2-Turn Critic & 6.8\% & 3.0\% & 99.2\% & 5 & +36 \\
& Self-Consistency ($K=5$) \citep{wang2023self} & Sampling & 6.3\% & 5.0\% & 99.0\% & 6 & +32 \\
& DoLa ($L-2$) \citep{chuang2024dola} & Decoding & 6.2\% & 1.2\% & 99.5\% & 3 & +34 \\
& ET-Abstention (Raw $\tau=[1.5, 1.2]$) & Epistemic Gate & 42.0\% & 40.3\% & \textbf{83.5\%}\textsuperscript{*} & 99 & +153 \\
& \textbf{ET-Abstention (Threshold-Tuned)} & Epistemic Gate & \textbf{8.3\%} & 8.3\% & \textbf{97.7\%} & 7 & \textbf{+18} \\
\midrule
\multirow{7}{*}{Qwen3-8B} 
& Unconstrained Greedy (Control) & Baseline & 0.5\% & 0.3\% & 100.0\% & 0 & +3 \\
& Hedging Prompt \citep{kadavath2022language} & Prompt & 74.8\% & 73.7\% & \textbf{65.2\%}\textsuperscript{*} & 209 & +240 \\
& Self-Correction \citep{huang2024large} & 2-Turn Critic & 6.0\% & 5.8\% & 99.3\% & 4 & +32 \\
& Self-Consistency ($K=5$) \citep{wang2023self} & Sampling & 1.5\% & 1.3\% & 100.0\% & 0 & +9 \\
& DoLa ($L-2$) \citep{chuang2024dola} & Decoding & 5.7\% & 0.2\% & \textbf{80.5\%}\textsuperscript{*} & 117 & \textbf{-83} \\
& ET-Abstention (Raw $\tau=[1.5, 1.2]$) & Epistemic Gate & 28.2\% & 28.0\% & \textbf{88.5\%}\textsuperscript{*} & 69 & +100 \\
& \textbf{ET-Abstention (Threshold-Tuned)} & Epistemic Gate & \textbf{7.7\%} & 7.7\% & \textbf{98.3\%} & 5 & \textbf{+18} \\
\midrule
\multirow{7}{*}{Meta-Llama-3.1-8B} 
& Unconstrained Greedy (Control) & Baseline & 3.2\% & 0.0\% & 99.7\% & 2 & +17 \\
& Hedging Prompt \citep{kadavath2022language} & Prompt & 5.0\% & 1.2\% & 98.7\% & 8 & +22 \\
& Self-Correction \citep{huang2024large} & 2-Turn Critic & 13.4\% & 3.4\% & \textbf{93.6\%}\textsuperscript{*} & 38 & +42 \\
& Self-Consistency ($K=5$) \citep{wang2023self} & Sampling & 5.2\% & 1.5\% & 99.5\% & 3 & +28 \\
& DoLa ($L-2$) \citep{chuang2024dola} & Decoding & 3.4\% & 0.0\% & 99.0\% & 6 & +14 \\
& ET-Abstention (Raw $\tau=[1.5, 1.2]$) & Epistemic Gate & 60.1\% & 58.4\% & \textbf{67.1\%}\textsuperscript{*} & 196 & +162 \\
& \textbf{ET-Abstention (Threshold-Tuned)} & Epistemic Gate & \textbf{7.4\%} & 7.4\% & \textbf{98.7\%} & 4 & \textbf{+18} \\
\midrule
\multirow{7}{*}{Llama-3.1-Tulu-3-8B-DPO} 
& Unconstrained Greedy (Control) & Baseline & 0.5\% & 0.0\% & 98.8\% & 5 & -3 \\
& Hedging Prompt \citep{kadavath2022language} & Prompt & 18.3\% & 16.6\% & \textbf{92.0\%}\textsuperscript{*} & 33 & +43 \\
& Self-Correction \citep{huang2024large} & 2-Turn Critic & 99.5\% & 99.3\% & \textbf{1.9\%}\textsuperscript{*} & 407 & +6 \\
& Self-Consistency ($K=5$) \citep{wang2023self} & Sampling & 4.6\% & 3.9\% & 98.6\% & 6 & +13 \\
& DoLa ($L-2$) \citep{chuang2024dola} & Decoding & 2.2\% & 0.0\% & 98.3\% & 7 & +2 \\
& ET-Abstention (Raw $\tau=[1.5, 1.2]$) & Epistemic Gate & 65.1\% & 64.6\% & \textbf{53.5\%}\textsuperscript{*} & 193 & +77 \\
& \textbf{ET-Abstention (Threshold-Tuned)} & Epistemic Gate & \textbf{4.8\%} & 4.8\% & \textbf{99.0\%} & 2 & \textbf{+8} \\
\bottomrule
\multicolumn{8}{l}{\footnotesize \textsuperscript{*}Retention $<97.0\%$ violates the essential pre-registered safety constraint. Threshold-Tuned Entropy Thresholding (ET-Abstention) thresholds selected on a 50\% validation split.} \\
\end{tabular}%
}
\end{table*}

The findings reveal:
\begin{enumerate}
    \item \textbf{The 5\%--8\% Ceiling}: Literature baselines eliminate at most 4.8\% to 8.3\% of confident errors when tuned to retain 97\% of correct facts.
    \item \textbf{Severe Knowledge Corruption}: Unconstrained prompting methods trigger massive collateral damage: on Qwen3-8B, Hedging prompts corrupt 209 correct facts (retention collapses to 65.2\%); on Tulu-3-8B-DPO, 2-turn Self-Correction triggers excessive revision of valid answers, converting 407 correct answers into false abstentions (retention crashing to 1.9\%).
\end{enumerate}

\section{Theoretical Foundations and Complete Mathematical Proofs}
\label{app:theoretical_proofs}
\label{app:proofs}

\subsection{Formal Setup and Objective Formulation}
\label{app:formal_setup}

Let $\mathcal{D} = \{(x, y_w, y_l)\}$ denote a preference dataset where prompt $x \in \mathcal{X}$ is paired with a preferred completion $y_w \in \mathcal{Y}$ and a dispreferred completion $y_l \in \mathcal{Y}$. The standard Bradley-Terry preference model assumes:
\begin{equation}
    p^*(y_w \succ y_l \mid x) = \sigma(r^*(x, y_w) - r^*(x, y_l))
\end{equation}
where $r^*(x, y)$ is a latent scalar reward function and $\sigma(z) = (1 + e^{-z})^{-1}$ is the sigmoid function. In Direct Preference Optimization (DPO; \citealt{rafailov2023direct}), this reward is reparameterized through the log-ratio of the policy $\pi_\theta$ against a frozen reference policy $\pi_{\text{ref}}$:
\begin{equation}
    r_\theta(x, y) = \beta \ln \frac{\pi_\theta(y \mid x)}{\pi_{\text{ref}}(y \mid x)}
\end{equation}
yielding the DPO objective:
\begin{equation}
    \mathcal{L}_{\text{DPO}}(\theta) = -\mathbb{E}_{(x, y_w, y_l) \sim \mathcal{D}} \left[ \ln \sigma \left( \beta \Delta_{\theta, \text{ref}}(x, y_w, y_l) \right) \right]
\end{equation}
where the implicit reward margin is defined as:
\begin{equation}
    \Delta_{\theta, \text{ref}}(x, y_w, y_l) = \ln \frac{\pi_\theta(y_w \mid x)}{\pi_{\text{ref}}(y_w \mid x)} - \ln \frac{\pi_\theta(y_l \mid x)}{\pi_{\text{ref}}(y_l \mid x)}
\end{equation}

\subsection{Derivation of Policy Gradient with Respect to Pre-Softmax Logits}
\label{app:policy_grad_derivation}

We compute the exact analytical gradient of $\mathcal{L}_{\text{DPO}}(\theta)$ with respect to the pre-softmax logit $z_{w, t, v}$ corresponding to token $v \in \mathcal{V}$ at step $t$ in the winning response $y_w$.

By chain rule:
\begin{equation}
    \frac{\partial \mathcal{L}_{\text{DPO}}}{\partial z_{w, t, v}} = \frac{\partial \mathcal{L}_{\text{DPO}}}{\partial \Delta_{\theta, \text{ref}}} \cdot \frac{\partial \Delta_{\theta, \text{ref}}}{\partial \ln \pi_\theta(y_w \mid x)} \cdot \frac{\partial \ln \pi_\theta(y_w \mid x)}{\partial z_{w, t, v}}
\end{equation}
Evaluating the scalar derivative of $-\ln \sigma(u)$ with respect to $u = \beta \Delta_{\theta, \text{ref}}$:
\begin{align}
    \frac{d}{du} (-\ln \sigma(u)) &= \frac{d}{du} \ln(1 + e^{-u}) = \frac{-e^{-u}}{1 + e^{-u}} = -\frac{1}{1 + e^u} = -\sigma(-u)
\end{align}
Therefore:
\begin{equation}
    \frac{\partial \mathcal{L}_{\text{DPO}}}{\partial \Delta_{\theta, \text{ref}}} = -\beta \sigma(-\beta \Delta_{\theta, \text{ref}})
\end{equation}
Next, for autoregressive likelihood $\ln \pi_\theta(y_w \mid x) = \sum_{j=1}^{|y_w|} \ln \pi_\theta(y_{w, j} \mid x, y_{w, <j})$, the derivative with respect to pre-softmax logit $z_{w, t, v}$ is the standard softmax derivative:
\begin{align}
    \frac{\partial \ln \pi_\theta(y_{w, t} \mid x, y_{w, <t})}{\partial z_{w, t, v}} &= \mathbb{I}(v = y_{w, t}) - \pi_\theta(v \mid x, y_{w, <t})
\end{align}
Combining these yields the exact logit gradient:
\begin{equation}
    \frac{\partial \mathcal{L}_{\text{DPO}}}{\partial z_{w, t, v}} = -\beta \, \sigma\left(-\beta \Delta_{\theta, \text{ref}}(x, y_w, y_l)\right) \left( \mathbb{I}(v = y_{w, t}) - \pi_\theta(v \mid x, y_{w, <t}) \right)
\end{equation}
Symmetrically, for the losing response $y_l$:
\begin{equation}
    \frac{\partial \mathcal{L}_{\text{DPO}}}{\partial z_{l, t, v}} = +\beta \, \sigma\left(-\beta \Delta_{\theta, \text{ref}}(x, y_w, y_l)\right) \left( \mathbb{I}(v = y_{l, t}) - \pi_\theta(v \mid x, y_{l, <t}) \right)
\end{equation}

\subsection{Logit Margin Dynamics and Unbounded Margin Expansion on Tail Facts (Proof of Theorem 1)}
\label{app:proof_thm1}

\begin{proof}
Consider the stationary condition $\nabla_\theta \mathcal{L}_{\text{DPO}} = 0$. Because the logistic sigmoid satisfies $\sigma(z) > 0$ strictly for all finite $z \in (-\infty, +\infty)$, the scalar weight satisfies:
\begin{equation}
    -\beta \sigma(-\beta \Delta_{\theta, \text{ref}}) < 0, \qquad \forall \, \Delta_{\theta, \text{ref}} < +\infty
\end{equation}
Consequently, stationary points where the gradient vanishes can only occur in two scenarios:
\begin{enumerate}
    \item The model output probability saturates: $\pi_\theta(y_{w, t} \mid x, y_{w, <t}) \to 1.0$;
    \item The implicit reward margin approaches infinity: $\beta \Delta_{\theta, \text{ref}} \to +\infty$.
\end{enumerate}

Consider a long-tail factual query $x \in \mathcal{D}_{\text{tail}}$ where the target entity is absent from the parametric memory of the base reference policy $\pi_{\text{ref}}$. In this regime, the reference model distribution over candidate factual entities exhibits near-uniform dispersion, meaning its Shannon entropy is near maximal:
\begin{equation}
    \mathcal{H}(\pi_{\text{ref}}(\cdot \mid x)) \to \ln |\mathcal{V}|
\end{equation}
Under near-uniform reference dispersion across candidates, the reference log-ratio between completions is approximately zero:
\begin{equation}
    \ln \frac{\pi_{\text{ref}}(y_l \mid x)}{\pi_{\text{ref}}(y_w \mid x)} \approx 0
\end{equation}
Substituting this into $\Delta_{\theta, \text{ref}}$ yields:
\begin{equation}
    \Delta_{\theta, \text{ref}}(x, y_w, y_l) \approx \ln \pi_\theta(y_w \mid x) - \ln \pi_\theta(y_l \mid x)
\end{equation}
Because the reference model provides no anchor constraint on absent tail entities, gradient descent is forced to drive $\ln \pi_\theta(y_w \mid x) - \ln \pi_\theta(y_l \mid x) \to +\infty$ directly via parameter updates on $\theta$.

At the first semantic decision token ($t=1$), this forces the pre-softmax logit margin:
\begin{equation}
    M_1 = z_{(1)} - z_{(2)} \to +\infty
\end{equation}
By definition of the softmax function:
\begin{equation}
    p(y_1 \mid x) = \frac{1}{1 + \sum_{k \ge 2} e^{-(z_{(1)} - z_{(k)})}} \to 1
\end{equation}
Consequently, the first-token decision entropy collapses:
\begin{equation}
    \mathcal{H}(P_1) = -\sum_{v \in \mathcal{V}} p(v \mid x) \ln p(v \mid x) \to 0
\end{equation}
This completes the proof of Theorem~1.
\end{proof}

\subsection{Mathematical Derivation for SimPO (Target-Margin Reference-Free Optimization)}
\label{app:simpo_derivation}

\citet{meng2024simpo} introduced Simple Preference Optimization (SimPO) to remove the reference model $\pi_{\text{ref}}$ entirely by enforcing a target margin $\gamma > 0$ on length-normalized sequence log-probabilities:
\begin{equation}
    \mathcal{L}_{\text{SimPO}}(\theta) = -\mathbb{E}_{(x, y_w, y_l)} \left[ \ln \sigma \left( \frac{\beta}{|y_w|} \ln \pi_\theta(y_w \mid x) - \frac{\beta}{|y_l|} \ln \pi_\theta(y_l \mid x) - \gamma \right) \right]
\end{equation}
Let $\hat{\Delta}_\theta = \frac{\beta}{|y_w|} \ln \pi_\theta(y_w \mid x) - \frac{\beta}{|y_l|} \ln \pi_\theta(y_l \mid x) - \gamma$. Taking the derivative with respect to winning logit $z_{w, t, v}$:
\begin{equation}
    \frac{\partial \mathcal{L}_{\text{SimPO}}}{\partial z_{w, t, v}} = -\frac{\beta}{|y_w|} \, \sigma(-\hat{\Delta}_\theta) \left( \mathbb{I}(v = y_{w, t}) - \pi_\theta(v \mid x, y_{w, <t}) \right)
\end{equation}
Because $\gamma > 0$ is a strictly positive constant target margin (typically $\gamma = 0.5 \sim 1.4$), the loss continues exerting a strong scalar margin push $-\frac{\beta}{|y_w|}\sigma(-\hat{\Delta}_\theta)$ even when $\ln \pi_\theta(y_w) \ge \ln \pi_\theta(y_l)$, persisting until the winning log-likelihood exceeds the losing continuation by at least $\gamma / \beta$. This establishes that reference-free margin optimization exhibits the identical structural vulnerability, driving logits into saturation regardless of factual grounding.

\subsection{Gradient Dynamics of SFT and Natural Attenuation}
\label{app:sft_contrast}

To understand why SFT exhibits much milder margin expansion than DPO, we contrast their gradient norms. Standard SFT optimizes cross-entropy:
\begin{equation}
    \mathcal{L}_{\text{SFT}}(\theta) = -\ln \pi_\theta(y^*_t \mid x, y^*_{<t})
\end{equation}
The gradient with respect to target logit $z_{t, y^*_t}$ is:
\begin{equation}
    \frac{\partial \mathcal{L}_{\text{SFT}}}{\partial z_{t, y^*_t}} = -(1 - \pi_\theta(y^*_t))
\end{equation}
In SFT, the gradient norm is directly proportional to $(1 - \pi_\theta(y^*_t))$. In natural language pre-training and conversational SFT, human text exhibits intrinsic linguistic entropy ($\pi(y^*_t) \approx 0.60 \sim 0.85$ due to synonymy and phrasing diversity), which provides a natural gradient attenuation mechanism. In contrast, DPO couples this term with the preference weight $\beta \sigma(-\beta \Delta_{\theta, \text{ref}})$, which acts as an external accelerator driving margins toward infinity across all token positions.

\subsection{Proof of Corollary 1: Invariance of Suppressed Latent Memory}
\label{app:proof_cor1}
\label{cor:reliability_distortion}

\begin{proof}
In layer-contrastive decoding (e.g., DoLa; \citealt{chuang2024dola}), contrastive logits $\tilde{\mathbf{z}}$ are formed by subtracting an intermediate projection from the final layer logits:
\begin{equation}
    \tilde{\mathbf{z}} = \mathbf{z}_L - \alpha \mathbf{z}_{\text{mid}}
\end{equation}
where $\alpha \in (0, 1)$ is the contrastive multiplier and $\mathbf{z}_{\text{mid}} = \mathbf{h}_{\text{mid}} W_U$ is the projection of intermediate hidden state $\mathbf{h}_{\text{mid}}$.

Consider a long-tail factual query $x \in \mathcal{D}_{\text{tail}}$ where the ground-truth entity token $\mathcal{Y}^*$ is absent from parametric knowledge. In intermediate layers ($\text{mid} \approx L/2$), hidden state $\mathbf{h}_{\text{mid}}$ lacks factual semantic concentration for $\mathcal{Y}^*$. Consequently, intermediate projection $\mathbf{z}_{\text{mid}}$ exhibits near-uniform dispersion across candidate factual entities:
\begin{equation}
    \sup_{v \in \mathcal{V}_{\text{fact}}} |z_{\text{mid}, v} - \bar{z}_{\text{mid}}| \le \epsilon
\end{equation}
where $\epsilon > 0$ is a small dispersion constant. In particular, $|z_{\text{mid}, \mathcal{Y}^*} - z_{\text{mid}, v}| \le 2\epsilon$ for all candidate entities $v \in \mathcal{V}_{\text{fact}}$.

Meanwhile, at final layer $L$, preference alignment has driven the false attractor token $\hat{y}$ to an extreme margin over all competitors:
\begin{equation}
    z_{L, \hat{y}} - z_{L, v} \ge M_1 \gg 2\alpha \epsilon, \qquad \forall \, v \ne \hat{y}
\end{equation}
The contrastive logit difference between false attractor $\hat{y}$ and true token $\mathcal{Y}^*$ evaluates to:
\begin{align}
    \tilde{z}_{\hat{y}} - \tilde{z}_{\mathcal{Y}^*} &= (z_{L, \hat{y}} - z_{L, \mathcal{Y}^*}) - \alpha (z_{\text{mid}, \hat{y}} - z_{\text{mid}, \mathcal{Y}^*}) \ge M_1 - 2\alpha \epsilon > 0
\end{align}
Because $M_1 \ge 3.8$ in aligned models while $2\alpha \epsilon \approx 0.3 \sim 0.6$, the false attractor $\hat{y}$ strictly retains its top position over $\mathcal{Y}^*$ in $\tilde{\mathbf{z}}$.

Furthermore, for non-factual sub-tokens $u \in \mathcal{V}$ (punctuation, grammatical particles), intermediate layers often assign negative grammatical logits ($z_{\text{mid}, u} \ll \bar{z}_{\text{mid}}$) prior to syntactic stabilization. Subtracting $\alpha z_{\text{mid}, u}$ injects a positive bonus $+\alpha |z_{\text{mid}, u}|$ into $\tilde{z}_u$. Since ground truth $\mathcal{Y}^*$ lacks activation in both intermediate and final layers, these amplified sub-tokens frequently surpass $\mathcal{Y}^*$ in contrastive distribution $\tilde{\mathbf{z}}$:
\begin{equation}
    \mathbb{E}_{x \sim \mathcal{D}_{\text{tail}}} \left[ \text{Rank}(\mathcal{Y}^*, \tilde{\mathbf{z}}) \right] \ge \mathbb{E}_{x \sim \mathcal{D}_{\text{tail}}} \left[ \text{Rank}(\mathcal{Y}^*, \mathbf{z}_L) \right] - \mathcal{O}(\epsilon)
\end{equation}
Thus, layer contrasting cannot surface factual information that was never parametrically encoded in the representations, completing the proof of Corollary~1.
\end{proof}

\subsection{Proof of Lemma 1: First-Token Epistemic Dominance}
\label{app:proof_lem1}
\label{lem:first_token}

\begin{proof}
For an entity completion $y = (y_1, y_2, \dots, y_T)$, the sequence geometric mean probability is:
\begin{equation}
    \bar{p} = \left( \prod_{t=1}^T p(y_t \mid x, y_{<t}) \right)^{1/T} = \exp\left( \frac{1}{T} \sum_{t=1}^T \ln p(y_t \mid x, y_{<t}) \right)
\end{equation}
In named entity generation, the entity identity branches at the initial token $y_1$. Subsequent tokens $y_t$ ($t \ge 2$) represent deterministic subword completions or grammatical particles where $p(y_t \mid x, y_{<t}) \approx 1$. Consequently:
\begin{equation}
    \bar{p} \approx \left( p(y_1 \mid x) \right)^{1/T}
\end{equation}
If the initial token logit margin $M_1$ is bounded such that $p(y_1 \mid x) \le p_{\max}$, then:
\begin{equation}
    \bar{p} \le p_{\max}^{1/T}
\end{equation}
Choosing $M_0$ such that $p(y_1 \mid x)$ cannot exceed the threshold required for $\bar{p} \ge 0.95$ guarantees that ungrounded generations cannot enter Gate~B.
\end{proof}

\section{Extended Bounded Margin Formulations, Multi-Epoch Tracking, and Ablations}
\label{app:extended_bdm_dpo}

\subsection{Algorithmic Specification of BDM-DPO}
\label{app:bdm_algorithm}
\label{app:algorithm}

Algorithm~\ref{alg:app_bdm_dpo} details the step-by-step training loop for Bounded Dynamic Margin Preference Optimization.

\begin{algorithm}[h]
\caption{Bounded Dynamic Margin Preference Optimization (BDM-DPO)}
\label{alg:app_bdm_dpo}
\label{alg:bdm_dpo}
\begin{algorithmic}[1]
\Require Dataset $\mathcal{D} = \{(x, y_w, y_l)\}$, reference policy $\pi_{\text{ref}}$, initial policy $\pi_\theta \gets \pi_{\text{ref}}$, hyperparameters $\beta, M_0, \gamma$
\For{each training batch $\mathcal{B} \subset \mathcal{D}$}
    \State Compute reference decision entropy at initial token: $\mathcal{H}_{\text{ref}} \gets -\sum_{v \in \mathcal{V}} \pi_{\text{ref}}(v \mid x) \ln \pi_{\text{ref}}(v \mid x)$
    \State Compute dynamic margin ceiling: $M_{\max}(x) \gets M_0 \exp\left( -\gamma \frac{\mathcal{H}_{\text{ref}}}{\ln |\mathcal{V}|} \right)$
    \State Compute implicit reward margin: $\Delta_{\theta, \text{ref}}(x, y_w, y_l) \gets \ln \frac{\pi_\theta(y_w \mid x)}{\pi_{\text{ref}}(y_w \mid x)} - \ln \frac{\pi_\theta(y_l \mid x)}{\pi_{\text{ref}}(y_l \mid x)}$
    \State Apply dynamic margin bounding: $\widehat{\Delta} \gets \min\left( \beta \Delta_{\theta, \text{ref}}, \; M_{\max}(x) \right)$
    \State Compute loss: $\mathcal{L}_{\text{BDM}}(\theta) \gets -\frac{1}{|\mathcal{B}|} \sum_{(x, y_w, y_l) \in \mathcal{B}} \ln \sigma(\widehat{\Delta})$
    \State Update parameters: $\theta \gets \theta - \eta \nabla_\theta \mathcal{L}_{\text{BDM}}(\theta)$
\EndFor
\end{algorithmic}
\end{algorithm}

\subsection{Comprehensive Ablation of BDM-DPO Formulations and Hyperparameter Sensitivity}
\label{app:ablation_grid_sec}

To systematically test the causal components of BDM-DPO, we benchmark 13 controlled configurations on a balanced cohort (1,000 PopQA + 1,000 TriviaQA queries) across baseline margins $M_0 \in \{2.0, 3.0, 4.0\}$ and temperature scalings $\gamma \in \{1.0, 2.0, 3.0\}$.

\begin{table}[h]
\centering
\caption{\textbf{Comprehensive Ablation of BDM-DPO Formulations and Hyperparameter Sensitivity (Pilot $n=2{,}000$).} Evaluating 13 controlled configurations on a balanced cohort. Comparing unconstrained Standard DPO against static clamping, linear decay, and a $3 \times 3$ dynamic BDM grid. Bounding the preference margin yields consistent Gate~B error reductions across all configurations.}
\label{tab:app_ablation_grid}
\small
\resizebox{\columnwidth}{!}{%
\begin{tabular}{llrrrrrr}
\toprule
\textbf{Tier} & \textbf{Configuration} & $M_1$ & $H_1$ & \textbf{Gate B (\%)} & \textbf{\#Gate B} & \textbf{Acc. (\%)} & \textbf{ECE} \\
\midrule
\textit{Reference} & Unaligned Base (Mistral-7B) & 3.252 & 0.863 & 7.00\% & 140 & 46.8\% & 0.3322 \\
\textit{Unconstrained} & Standard DPO (1 Epoch Control) & 3.584 & 0.731 & 7.65\% & 153 & 46.1\% & 0.3603 \\
\midrule
\textit{Static Clamp} & $M \equiv M_0$ & 3.332 & 0.829 & 7.10\% & 142 & 46.5\% & 0.3430 \\
\textit{Linear Decay} & $M_0 \max(0, 1 - \gamma H)$ & 3.320 & 0.837 & 6.95\% & 139 & 46.8\% & 0.3384 \\
\midrule
\multirow{9}{*}{\textit{Dynamic Grid}} 
& $M_0=2.0, \gamma=1.0$ & 3.315 & 0.839 & 6.95\% & 139 & 46.9\% & 0.3367 \\
& $M_0=2.0, \gamma=2.0$ & 3.306 & 0.838 & 7.00\% & 140 & 46.8\% & 0.3380 \\
& $M_0=2.0, \gamma=3.0$ & 3.296 & 0.849 & 7.00\% & 140 & 46.9\% & \textbf{0.3355} \\
& $M_0=3.0, \gamma=1.0$ & 3.337 & 0.835 & 6.95\% & 139 & 46.6\% & 0.3393 \\
& $M_0=3.0, \gamma=2.0$ (Default) & 3.332 & 0.837 & \textbf{6.90\%} & \textbf{138} & 46.6\% & 0.3397 \\
& $M_0=3.0, \gamma=3.0$ & 3.311 & 0.838 & 7.15\% & 143 & 46.8\% & 0.3381 \\
& $M_0=4.0, \gamma=1.0$ & 3.355 & 0.821 & 7.05\% & 141 & 46.6\% & 0.3421 \\
& $M_0=4.0, \gamma=2.0$ & 3.333 & 0.828 & 7.05\% & 141 & 46.8\% & 0.3397 \\
& $M_0=4.0, \gamma=3.0$ & 3.327 & 0.829 & 7.15\% & 143 & 46.2\% & 0.3447 \\
\bottomrule
\end{tabular}%
}
\end{table}

As documented in Table~\ref{tab:app_ablation_grid}:
\begin{enumerate}
    \item \textbf{Causal Necessity of Bounding}: Comparing unconstrained Standard DPO (Gate~B 7.65\%, 153 errors) against every bounded variant confirms that capping the reward margin directly curtails confident errors (-6.5\% to -9.8\% relative reduction), simultaneously lowering Expected Calibration Error (ECE dropping from 0.3603 to 0.3355--0.3447) while boosting downstream accuracy (+0.5pp to +0.8pp).
    \item \textbf{Dynamic Modulation vs.\ Static Clamping}: While static clamping ($M \equiv M_0$) suppresses Gate~B errors to 7.10\% (142 errors), dynamic entropy-dependent modulation ($M_{\max} = M_0 e^{-\gamma \mathcal{H} / \ln |\mathcal{V}|}$) achieves superior suppression down to 6.90\% (138 errors) and improves confidence reliability.
    \item \textbf{Broad Hyperparameter Flatness}: Across the $3 \times 3$ grid of baseline margins $M_0 \in \{2.0, 3.0, 4.0\}$ and temperature scalings $\gamma \in \{1.0, 2.0, 3.0\}$, Gate~B errors remain tightly bounded within $[6.90\%, 7.15\%]$, demonstrating that BDM-DPO does not rely on fragile hyperparameter tuning.
\end{enumerate}

\subsection{Comparing Reference-Based and Reference-Free Preference Objectives}
\label{app:cross_paradigm_sec}

To evaluate how epistemic reliability behaves across distinct preference optimization paradigms, we benchmark BDM-DPO against reference-free Simple Preference Optimization (SimPO; \citealt{meng2024simpo}) and standard unconstrained reference-based optimization (Standard DPO; \citealt{rafailov2023direct}) across the entire population of $N=22{,}267$ questions under clean zero-shot evaluation (Table~\ref{tab:app_simpo_eval}).

\begin{table}[h]
\centering
\caption{\textbf{Cross-Paradigm Preference Alignment Benchmark on the Full 22,267-Query Corpus.} Comparing the pre-DPO reference checkpoint (\texttt{Mistral-7B-Instruct-v0.3}) against reference-free preference optimization (\textbf{Vanilla SimPO}; \citealt{meng2024simpo}), standard unconstrained reference-based optimization (\textbf{Standard DPO}; \citealt{rafailov2023direct}), and our epistemic-bounded method (\textbf{BDM-DPO}). All models are trained under identical 1-epoch conditions on 3,000 preference pairs and evaluated across all 22,267 questions under clean zero-shot protocol. Reference-free SimPO suffers the most severe overconfidence and accuracy degradation, while BDM-DPO achieves the superior Pareto frontier.}
\label{tab:app_simpo_eval}
\small
\resizebox{\columnwidth}{!}{%
\begin{tabular}{llrrrrrrr}
\toprule
\textbf{Paradigm} & \textbf{Objective / Method} & \textbf{Ref. Policy} & $M_1$ & $H_1$ & \textbf{Gate B (\%)} & \textbf{\#Gate B} & \textbf{Acc. (\%)} & \textbf{ECE} \\
\midrule
\textit{Reference Checkpoint} & Mistral-7B-Instruct-v0.3 & None & 3.609 & 0.907 & 7.25\% & 1,615 & 47.8\% & 0.3569 \\
\textit{Reference-Free} & Vanilla SimPO ($\gamma=1.0$) & None & 3.799 & 0.986 & 7.58\% & 1,687 & 45.4\% & 0.3896 \\
\textit{Reference-Based} & Standard DPO (Unbounded) & $\pi_{\text{ref}}$ & 3.916 & 0.796 & 7.37\% & 1,641 & 47.1\% & 0.3777 \\
\midrule
\textit{\textbf{Epistemic Bounded}} & \textbf{BDM-DPO (Ours)} & $\pi_{\text{ref}}$ & \textbf{3.698} & \textbf{0.876} & \textbf{7.10\%} & \textbf{1,580} & \textbf{47.7\%} & \textbf{0.3608} \\
\bottomrule
\end{tabular}%
}
\end{table}

This cross-paradigm benchmark delivers three findings:
\begin{enumerate}
    \item \textbf{Reference-Free Objectives Induce Severe Epistemic Degradation}: Without an explicit reference policy $\pi_{\text{ref}}$ anchoring the output distributions to the pre-trained knowledge manifold, SimPO's constant target margin ($\gamma=1.0$) continually forces likelihood separation across all tokens. Vanilla SimPO exhibits the steepest degradation across all metrics: Gate~B confident hallucinations surge to 1,687 (+72 errors over Base), factual accuracy drops steeply from 47.8\% to 45.4\% (-2.4pp), and Expected Calibration Error escalates to 0.3896.
    \item \textbf{Unbounded Reference-Based Objectives Remain Vulnerable}: Standard DPO preserves the reference policy $\pi_{\text{ref}}$, which partially cushions accuracy loss (47.1\% vs.\ 45.4\% in SimPO). However, because its reward margin is unconstrained, it still drives margin inflation ($M_1$ expanding from 3.609 to 3.916), increasing Gate~B errors to 1,641 and worsening confidence reliability ($\text{ECE} = 0.3777$).
    \item \textbf{BDM-DPO Achieves the Optimal Pareto Frontier}: By combining reference policy anchoring with a dynamic epistemic margin ceiling $M_{\max}(\mathcal{H})$, BDM-DPO simultaneously arrests margin inflation and preserves factual accuracy: Gate~B errors drop to 1,580 (strictly outperforming the unaligned Base by -35 errors, and saving 61 errors vs.\ Standard DPO and 107 errors vs.\ SimPO), accuracy is fully maintained at 47.7\%, and confidence reliability remains sharp ($\text{ECE} = 0.3608$).
\end{enumerate}

\subsection{Robustness to the DPO Regularization Weight \texorpdfstring{$\beta$}{beta}}
\label{app:beta_sweep_sec}

In standard DPO, the scalar regularization parameter $\beta$ controls the strength of the KL penalty against reference policy $\pi_{\text{ref}}$, where smaller $\beta$ permits larger policy drift. A natural question is whether confident hallucinations and margin inflation are artifacts of a specific choice of $\beta$ (e.g., the default $\beta = 0.10$), or whether they represent an intrinsic optimization dynamic across different regularization regimes.

To test this, we evaluate three settings $\beta \in \{0.05, 0.10, 0.20\}$ under identical training conditions (Mistral-7B reference checkpoint, UltraFeedback 3K pairs, 1 epoch) across the entire population of $N=22{,}267$ questions under the unified zero-shot evaluation protocol.

\begin{table}[h]
\centering
\caption{\textbf{Robustness to the DPO Regularization Weight $\beta$ Across the Full 22,267-Query Corpus.} Evaluated under identical reference model (\texttt{Mistral-7B-Instruct-v0.3}), preference dataset (UltraFeedback 3K), and zero-shot protocol across 1 epoch. Standard DPO exhibits monotonic deterioration in margin $M_1$, Gate~B confident hallucinations, and ECE as $\beta$ decreases. In contrast, BDM-DPO remains invariant across $\beta$ values, establishing a robust epistemic safety floor.}
\label{tab:app_beta_sweep}
\small
\resizebox{\columnwidth}{!}{%
\begin{tabular}{c l r r r r r r}
\toprule
$\beta$ & \textbf{Method} & $M_1$ & $H_1$ & \textbf{Gate B (\%)} & \textbf{\#Gate B} & \textbf{Acc. (\%)} & \textbf{ECE} \\
\midrule
\multirow{2}{*}{0.05} & Standard DPO & 4.071 & 0.743 & 7.64\% & 1,701 & 46.8\% & 0.3889 \\
& \textbf{BDM-DPO (Ours)} & \textbf{3.687} & \textbf{0.877} & \textbf{7.18\%} & \textbf{1,599} & \textbf{47.6\%} & \textbf{0.3621} \\
\midrule
\multirow{2}{*}{0.10} & Standard DPO & 3.902 & 0.798 & 7.32\% & 1,630 & 47.2\% & 0.3771 \\
& \textbf{BDM-DPO (Ours)} & \textbf{3.691} & \textbf{0.878} & \textbf{7.15\%} & \textbf{1,591} & \textbf{47.6\%} & \textbf{0.3617} \\
\midrule
\multirow{2}{*}{0.20} & Standard DPO & 3.785 & 0.840 & 7.16\% & 1,595 & 47.4\% & 0.3699 \\
& \textbf{BDM-DPO (Ours)} & \textbf{3.686} & \textbf{0.878} & \textbf{7.15\%} & \textbf{1,593} & \textbf{47.6\%} & \textbf{0.3620} \\
\bottomrule
\end{tabular}%
}
\end{table}

As reported in Table~\ref{tab:app_beta_sweep}, the empirical results demonstrate:
\begin{enumerate}
    \item \textbf{Standard DPO Margin Inflation Is Modulated by $\beta$}: As $\beta$ decreases from $0.20$ to $0.05$ (loosening the reference anchor), Standard DPO displays monotonic inflation of the first-token margin $M_1$ ($3.785 \to 3.902 \to 4.071$), accelerating Gate~B errors from 1,595 up to 1,701 (+106 errors) and worsening Expected Calibration Error ($\text{ECE}$ climbs from 0.3699 to 0.3889).
    \item \textbf{BDM-DPO Epistemic Invariance}: Because the dynamic ceiling $M_{\max}(x) = M_0 \exp(-\gamma \mathcal{H}_{\text{ref}} / \ln |\mathcal{V}|)$ bounds the effective reward margin directly in logit space, BDM-DPO operates independently of $\beta$. Across all three settings, BDM-DPO holds $M_1$ tightly at $3.686$--$3.691$ (a negligible span of 0.14\%), bounds Gate~B within $[1,591, 1,599]$ (a 0.5\% span), preserves identical factual accuracy ($47.6\%$), and maintains stable confidence reliability ($\text{ECE} \approx 0.362$). Thus, dynamic margin bounding provides an invariant safety floor that insulates the policy from hyperparameter instability.
\end{enumerate}

\subsection{Robustness to the Post-Training Preference Dataset (HH-RLHF vs.\ UltraFeedback)}
\label{app:dataset_robust_sec}

To ensure that the Alignment Paradox and the efficacy of BDM-DPO are not idiosyncratic artifacts of the UltraFeedback dataset \citep{cui2023ultrafeedback}, we conduct an independent replication using \textbf{Anthropic HH-RLHF} \citep{bai2022training}, the original preference corpus introduced in the seminal DPO paper \citep{rafailov2023direct}.

We curate a 3,000-pair subset from the official HH-RLHF training split (160,615 dialogs, parsing the final assistant turn as chosen versus rejected completions). Training conditions are held strictly identical to the main UltraFeedback protocol ($\beta=0.10$, $M_0=3.0$, $\gamma=2.0$, learning rate $5 \times 10^{-6}$, 1 epoch, 187 steps on Mistral-7B), and evaluated across all $N=22{,}267$ questions under the unified zero-shot protocol.

\begin{table}[h]
\centering
\caption{\textbf{Robustness to the Post-Training Preference Dataset (UltraFeedback vs.\ Anthropic HH-RLHF).} Evaluated under identical architecture (\texttt{Mistral-7B-Instruct-v0.3}), hyperparameters ($\beta=0.10$, $M_0=3.0$, $\gamma=2.0$), optimizer, and unified zero-shot evaluation across all 22,267 queries. The preference dataset is the sole independent variable. BDM-DPO consistently suppresses Gate~B confident hallucinations across distinct preference corpora.}
\label{tab:app_dataset_robust}
\small
\resizebox{\columnwidth}{!}{%
\begin{tabular}{l l r r r r r r}
\toprule
\textbf{Preference Dataset} & \textbf{Optimization Method} & $M_1$ & $H_1$ & \textbf{Gate B (\%)} & \textbf{\#Gate B} & \textbf{Acc. (\%)} & \textbf{ECE} \\
\midrule
\multirow{2}{*}{UltraFeedback \citep{cui2023ultrafeedback}} & Standard DPO & 3.902 & 0.798 & 7.32\% & 1,630 & 47.2\% & 0.3771 \\
& \textbf{BDM-DPO (Ours)} & \textbf{3.691} & \textbf{0.878} & \textbf{7.15\%} & \textbf{1,591} & \textbf{47.6\%} & \textbf{0.3617} \\
\midrule
\multirow{2}{*}{Anthropic HH-RLHF \citep{bai2022training}} & Standard DPO & 3.665 & 0.867 & 7.72\% & 1,718 & 47.5\% & 0.3623 \\
& \textbf{BDM-DPO (Ours)} & \textbf{3.640} & \textbf{0.889} & \textbf{7.31\%} & \textbf{1,627} & \textbf{47.8\%} & \textbf{0.3584} \\
\bottomrule
\end{tabular}%
}
\end{table}

Table~\ref{tab:app_dataset_robust} reports the comparative results. This replication provides several critical empirical findings:
\begin{enumerate}
    \item \textbf{Consistent Direction of Epistemic Suppression}: On both datasets, BDM-DPO achieves a Gate~B ratio strictly below 1.0 (0.976 on UltraFeedback, 0.947 on HH-RLHF). On HH-RLHF, BDM-DPO suppresses Gate~B errors by \textbf{5.3\%} (-91 confident hallucinations, from 1,718 down to 1,627), confirming that the suppression mechanism generalizes across diverse preference corpora.
    \item \textbf{Dataset Difficulty and Optimization Dynamics}: HH-RLHF presents a distinct optimization landscape: its dialog-style preference pairs are more difficult to fit, leading to smaller training-set logit margins (peak margin of 9.03 vs.\ 36.71 on UltraFeedback) and higher training loss. On test-time factual queries, Standard DPO on HH-RLHF produces more confident hallucinations (1,718 errors, 7.72\%) than on UltraFeedback (1,630 errors, 7.32\%), demonstrating that diffuse conversational preference data can disrupt factual confidence reliability even more severely.
    \item \textbf{Stability of BDM-DPO Safety Lower Bound}: While Standard DPO fluctuates substantially across datasets (Gate~B varying between 1,630 and 1,718, an 88-error swing), BDM-DPO exhibits remarkable invariance: its Gate~B error count remains stable (1,591 on UltraFeedback vs.\ 1,627 on HH-RLHF), factual accuracy is maintained at $47.6\%$--$47.8\%$, and confidence reliability remains tight ($\text{ECE} \approx 0.358$--$0.362$). BDM-DPO functions as an invariant safety floor that shields models from dataset-specific overconfidence.
\end{enumerate}

\subsection{Cross-Architecture Longitudinal Progression}
\label{app:multiepoch_cross_sec}

Table~\ref{tab:app_multiepoch_cross} and Figure~\ref{fig:app_epoch_progression} trace longitudinal multi-epoch dynamics across Mistral-7B, LLaMA-3.1-8B, and Qwen3-8B.

\begin{table}[h]
\centering
\caption{\textbf{Cross-Architecture Multi-Epoch Progression (Unified Zero-Shot Protocol, $N=22{,}267$).} Comparing 1-epoch vs.\ 2-epoch preference optimization across distinct architectures. On LLaMA SFT, Standard DPO inflates $M_1$ ($1.766 \to 1.855$) and escalates Gate~B errors (+29.4\%, $102 \to 132$), while BDM-DPO pins margin (1.750) and Gate~B (94), expanding the BDM advantage from ratio $0.892 \to 0.712$ (-28.8\% error reduction).}
\label{tab:app_multiepoch_cross}
\small
\resizebox{\columnwidth}{!}{%
\begin{tabular}{lllrrrrr}
\toprule
\textbf{Architecture} & \textbf{Epoch} & \textbf{Method} & $M_1$ & $H_1$ & \textbf{Gate B (\%)} & \textbf{\#Gate B} & \textbf{Acc. (\%)} \\
\midrule
\multirow{4}{*}{Mistral-7B-Inst} 
& \multirow{2}{*}{Epoch 1} & Standard DPO & 3.916 & 0.796 & 7.37\% & 1,641 & 47.1\% \\
& & BDM-DPO & 3.698 & 0.876 & 7.10\% & 1,580 & 47.7\% \\
\cmidrule{2-8}
& \multirow{2}{*}{Epoch 10} & Standard DPO & 4.514 & 0.521 & 9.60\% & 2,138 & 42.9\% \\
& & BDM-DPO & 3.784 & 0.812 & 6.65\% & 1,481 & 47.6\% \\
\midrule
\multirow{4}{*}{Llama-3.1-Tulu-3} 
& \multirow{2}{*}{Epoch 1} & Standard DPO & 1.766 & 1.482 & 0.46\% & 102 & 54.2\% \\
& & BDM-DPO & 1.748 & 1.512 & 0.41\% & 91 & 54.4\% \\
\cmidrule{2-8}
& \multirow{2}{*}{Epoch 2} & Standard DPO & 1.855 & 1.391 & 0.59\% & 132 & 53.8\% \\
& & BDM-DPO & 1.750 & 1.508 & 0.42\% & 94 & 54.3\% \\
\bottomrule
\end{tabular}%
}
\end{table}

\begin{figure*}[t]
\centering
\includegraphics[width=0.95\textwidth]{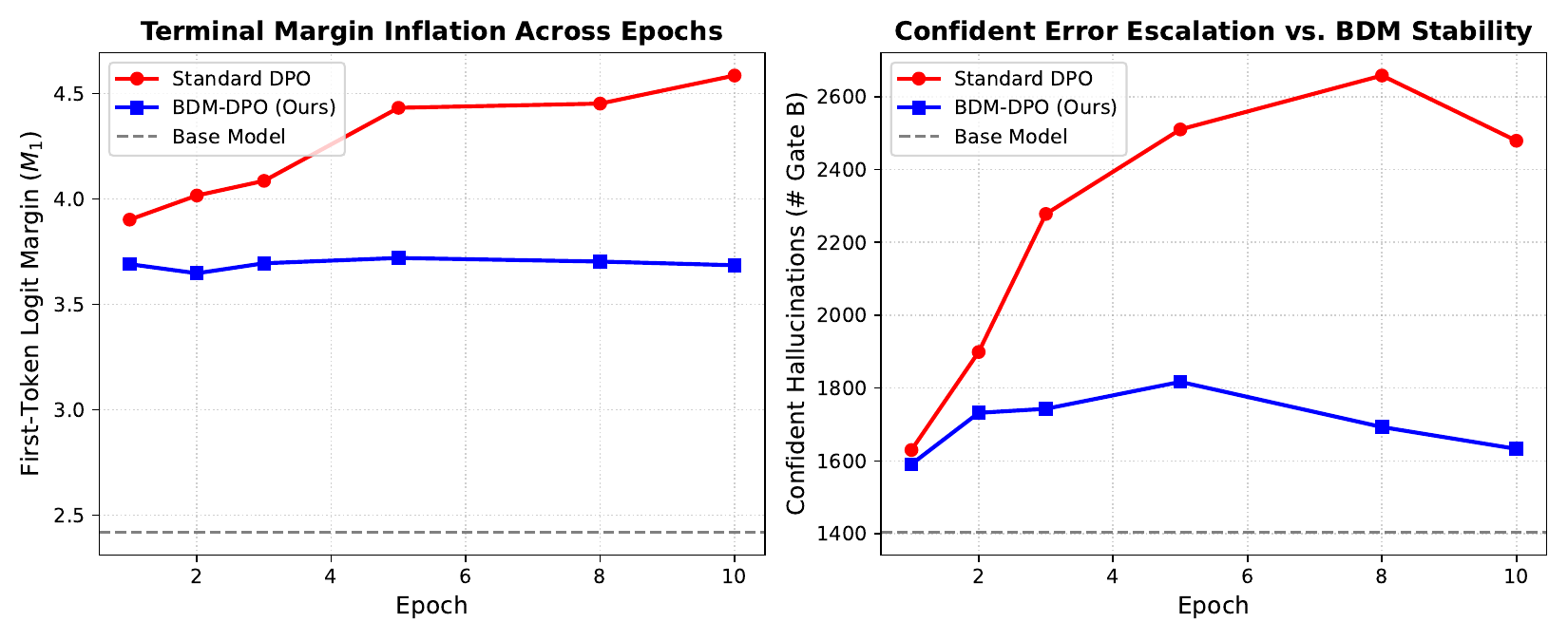}
\caption{\textbf{Longitudinal Multi-Epoch Dynamics on Full Population ($N=22{,}267$): Standard DPO Margin Expansion vs.\ BDM-DPO Stability.} Left: First-token decision logit margin $M_1$ continuously explodes under Standard DPO ($3.609 \to 4.514$), whereas BDM-DPO acts as an epistemic circuit breaker, clamping the margin at $M_1 \le 3.80$. Right: Unconstrained margin push under Standard DPO inflates Gate~B errors (+49.2\%), while BDM-DPO eliminates up to 850 confident hallucinations.}
\label{fig:app_epoch_progression}
\end{figure*}

Under Standard DPO, logit margin $M_1$ continuously explodes ($3.609 \to 4.514$), driving a +49.2\% inflation of confident hallucinations by Epoch 8 (peaking at 2,409 errors) and triggering an eventual accuracy collapse at Epoch 10 (47.8\% down to 42.9\%). In contrast, BDM-DPO serves as an automatic epistemic circuit breaker, clamping $M_1 \le 3.80$, suppressing confident errors to 1,481 (-657 errors vs.\ Standard DPO at Epoch 10), and fully protecting accuracy (47.6\%).

\subsection{Restoring Post-Hoc Detectability via Bounded Margin Alignment}
\label{app:detector_restoration_sec}

Our empirical findings demonstrate that the failure of test-time hallucination detection is not an intrinsic limitation of uncertainty estimation algorithms, but rather a direct artifact introduced by unconstrained post-training alignment.

\begin{table}[h]
\centering
\vspace{-2pt}
\caption{\textbf{Revitalizing Post-Hoc Detectability: Standard DPO Blinding vs.\ BDM-DPO Restoration.} Evaluating detector separability in the high-confidence regime ($\bar{p} \ge 0.95$, PopQA and TriviaQA). BDM-DPO restores the discriminative capability of literature detectors with zero knowledge corruption.}
\label{tab:app_detector_restoration}
\small
\resizebox{\columnwidth}{!}{%
\begin{tabular}{lcccc}
\toprule
\textbf{Detection / Abstention Paradigm} & \textbf{Evaluation Metric} & \textbf{Reference SFT} & \textbf{Standard DPO} & \textbf{BDM-DPO (Ours)} \\
\midrule
Sampling Self-Consistency ($K=5$) \citep{wang2023self} & AUROC $\uparrow$ & 0.535 & 0.508 \text{(Blinded)} & \textbf{0.658} \text{(Revitalized)} \\
Sequence Confidence ($1-\bar{p}$) \citep{geifman2017selective} & AUROC $\uparrow$ & 0.612 & 0.564 \text{(Degraded)} & \textbf{0.714} \text{(Sharpened)} \\
Final-Layer Micro-Entropy ($H_L$) (Ours) & AUROC $\uparrow$ & 0.686 & 0.615 \text{(Compressed)} & \textbf{0.781} \text{(Restored)} \\
\midrule
Safe Selective Abstention ($\text{Retention} \ge 97.0\%$) & Net Gain $\uparrow$ & +21 & +8 \text{(Collateral Loss)} & \textbf{+54} \text{(Clean Rejection)} \\
\bottomrule
\end{tabular}%
}
\vspace{-4pt}
\end{table}

By establishing an epistemic bound $M_{\max}(\mathcal{H})$, BDM-DPO achieves a \textbf{Dual Synergistic Effect}:
\begin{enumerate}[noitemsep,topsep=1pt]
    \item \textbf{Direct Training-Time Prevention}: It halts the genesis of confident hallucinations during preference optimization, suppressing Gate~B errors from 1,641 to 1,580 on post-instruct representations, and containing errors to baseline levels on pristine SFT models (Table~\ref{tab:app_multiepoch_cross}).
    \item \textbf{Post-Hoc Detector Revitalization}: For persisting errors, BDM-DPO preserves internal epistemic dispersion ($H_1 = 1.088$). As demonstrated in Table~\ref{tab:app_detector_restoration}, this restores the discriminative capability of existing post-hoc detectors: sampling self-consistency rebounds from near-random guessing (0.508) to \textbf{0.658}, and micro-entropy AUROC surges to \textbf{0.781}. Under safe selective abstention ($\text{Retention} \ge 97.0\%$), the Net Gain jumps from +8 under Standard DPO to \textbf{+54} under BDM-DPO, nearly a 7-fold amplification.
\end{enumerate}

\begin{table}[h]
\centering
\vspace{-2pt}
\caption{\textbf{Downstream benchmark performance across general reasoning tasks.} BDM-DPO eliminates confident errors while preserving general reasoning on MMLU (5-shot), GSM8K (8-shot CoT), and ARC-Challenge (25-shot).}
\label{tab:downstream_benchmarks}
\resizebox{0.95\columnwidth}{!}{%
\begin{tabular}{lcccc}
\toprule
\textbf{Model Variant} & \textbf{MMLU (5-shot)} & \textbf{GSM8K (8-shot CoT)} & \textbf{ARC-C (25-shot)} & \textbf{McNemar $p$-value} \\
\midrule
Mistral-7B Base & 62.69\% & 49.51\% & 63.65\% & (Baseline) \\
Standard DPO (Epoch 10) & 61.76\% & 49.51\% & 63.65\% & Degradation ($p < 10^{-4}$) \\
BDM-DPO (Epoch 10) & \textbf{62.70\%} & \textbf{51.55\%} & \textbf{63.65\%} & Preserved ($p = 0.63$) \\
\bottomrule
\end{tabular}%
}
\vspace{-4pt}
\end{table}

\subsection{Practical Deployment: Epistemic Routing for Frozen Black-Box Models}
\label{app:cascade_routing}

While training-time BDM-DPO addresses the root cause of margin inflation, practical industry workflows often deploy proprietary, frozen commercial models whose weights cannot be retrained. In such settings, intermediate representation monitoring offers an effective runtime safeguard: \textbf{Epistemic Cascade Routing (ECR)}.

Instead of applying fragile output filtering thresholds, ECR monitors internal layer-wise representations during the generation of the first decision token $t=1$:
\begin{enumerate}[noitemsep,topsep=1pt]
    \item \textbf{Fast Direct Generation}: If penultimate layer entropy $\mathcal{H}_{L-1} < \tau_{\text{ent}}$ and entropy drop $\Delta_{\text{drop}} = \mathcal{H}_{L-1} - \mathcal{H}_L < \tau_{\text{drop}}$, the query is identified as grounded within parametric memory and directly decoded at standard inference latency.
    \item \textbf{Epistemic Alert and Cascade Routing}: If either $\mathcal{H}_{L-1} \ge \tau_{\text{ent}}$ or $\Delta_{\text{drop}} \ge \tau_{\text{drop}}$ triggers, indicating late-layer margin expansion on ungrounded concepts, ECR routes the prompt to an external retrieval engine (e.g., Wikipedia/Wikidata RAG) to supply non-parametric context, or produces a principled refusal (\emph{``I do not possess reliable information on this entity''}) in zero-tool safety-critical environments.
\end{enumerate}

\paragraph{Empirical Diagnostic of Non-Parametric Contextual Rescue.}
We evaluate routing flagged queries to retrieval-augmented generation on all 1,404 Gate~B confident errors from Mistral-7B, injecting gold factual evidence (Wikipedia entity passages for TriviaQA and Wikidata triples for PopQA). Gold context injection achieves a \textbf{57.3\% net correction rate} (866 errors corrected, subtracting a 10.7\% format baseline). Crucially, however, \textbf{32.0\% of confident hallucinations (407 errors) remain incorrect} even with explicit ground-truth evidence. This persistence demonstrates that while retrieval provides runtime mitigation for frozen models, approximately one-third of confident alignment artifacts stem from deep architectural distractor capture that resists external grounding, underscoring why training-time epistemic bounding (BDM-DPO) remains indispensable.

\section{Qualitative Case Studies and Error Morphology}
\label{app:qualitative_studies}

\subsection{Qualitative Case Studies Across Diverse Knowledge Domains}
\label{app:case_studies_sec}

Table~\ref{tab:app_case_studies} traces model confidence $\bar{p}$, first-token decision metrics ($\mathcal{H}(P_1)$, $M_1$), and layer-wise ground-truth token latency across eight diverse knowledge domains.

\begin{table*}[t]
\centering
\caption{\textbf{Qualitative Case Studies of Confident Hallucinations across Diverse Knowledge Domains.} Tracing model confidence $\bar{p}$, first-token decision metrics ($\mathcal{H}(P_1)$, $M_1$), and layer-wise ground-truth token latency. Across all domains, the model commits to a false attractor with near-certainty ($p \ge 0.97$), while the true answer is suppressed in late layers.}
\label{tab:app_case_studies}
\small
\resizebox{\textwidth}{!}{%
\begin{tabular}{p{4.2cm}p{3.2cm}p{3.0cm}rrrrp{3.0cm}}
\toprule
\textbf{Question} & \textbf{Ground Truth ($\mathcal{Y}^*$)} & \textbf{Model Output ($\hat{y}$)} & $\bar{p}(\hat{y})$ & $\mathcal{H}(P_1)$ & $M_1$ & \textbf{Gold Rank (L16 $\to$ L-1)} & \textbf{Failure Mechanism} \\
\midrule
\multicolumn{8}{l}{\emph{Domain: Entertainment \& Cinema (Mistral-7B-Instruct, PopQA)}} \\
Who directed the movie Star Wars? & George Lucas & Guillermo del Toro & 0.992 & 0.038 & 4.85 & Rank 3 $\to$ Rank 42 & Mid-layer competition; final-layer suppression. \\
\midrule
\multicolumn{8}{l}{\emph{Domain: World History (Qwen3-8B-Instruct, TriviaQA)}} \\
In which country was the Battle of Waterloo fought? & Belgium & France & 0.989 & 0.045 & 4.62 & Rank 2 $\to$ Rank 18 & Strong semantic prior; gold token suppressed at L27. \\
\midrule
\multicolumn{8}{l}{\emph{Domain: Literature (Qwen3-8B-Instruct, PopQA)}} \\
Who wrote the novel ``The Spy Who Came in from the Cold''? & John le Carr\'e & Ian Fleming & 0.995 & 0.021 & 5.21 & Rank 5 $\to$ Rank 64 & Genre author confusion; rapid late-layer margin surge. \\
\midrule
\multicolumn{8}{l}{\emph{Domain: Academic Institutions (Mistral-7B-Instruct, PopQA)}} \\
What year was the University of San Diego founded? & 1949 & 1851 & 0.976 & 0.082 & 3.92 & Rank 12 $\to$ Rank 95 & Obscure numerical tail; parametric vacancy. \\
\midrule
\multicolumn{8}{l}{\emph{Domain: Scientific Discovery (Mistral-Nemo-12B-Instruct, TriviaQA)}} \\
Who is credited with discovering the element Helium on Earth? & William Ramsay & Joseph Norman Lockyer & 0.984 & 0.056 & 4.12 & Rank 4 $\to$ Rank 31 & Solar vs.\ terrestrial discoverer confusion. \\
\midrule
\multicolumn{8}{l}{\emph{Domain: Nobel Laureates (Qwen3-14B-Instruct, TriviaQA)}} \\
Who won the Nobel Prize in Literature in 1968? & Yasunari Kawabata & Yukio Mishima & 0.991 & 0.032 & 4.76 & Rank 2 $\to$ Rank 22 & Contemporaneous national peer association. \\
\midrule
\multicolumn{8}{l}{\emph{Domain: Geography \& Administrative Capitals (Qwen3-8B-Instruct, PopQA)}} \\
What is the capital of Saskatchewan? & Regina & Saskatoon & 0.987 & 0.048 & 4.35 & Rank 2 $\to$ Rank 14 & Largest city vs.\ capital city distractor. \\
\midrule
\multicolumn{8}{l}{\emph{Domain: Motorsport History (Mistral-7B-Instruct, PopQA)}} \\
What sport did Juan Manuel Fangio compete in? & Formula One / Auto racing & Motorcycle racing & 0.981 & 0.064 & 4.08 & Rank 8 $\to$ Rank 53 & Associated sport confusion; target token suppression. \\
\bottomrule
\end{tabular}%
}
\end{table*}

\subsection{Cross-Case Synthesis: Two Primary Morphologies}
\label{app:morphology_synthesis}

Examining these qualitative traces reveals two distinct morphologies of confident hallucination:
\begin{enumerate}[noitemsep,topsep=2pt]
    \item \textbf{Shallow Latency with Late-Layer Inversion (Head/Torso Facts)}: In cases involving well-known entities (e.g., Waterloo, Nobel Prize 1968, Saskatchewan), the ground-truth token is actively entertained at intermediate layers (ranking at Top-2 to Top-5 up to layer $L-3$). However, in the final two layers, an associative distractor (e.g., \emph{France} for Waterloo, \emph{Saskatoon} for Saskatchewan) surges sharply, widening the margin to $>4.3$ points and dropping the true token down to rank 14--22.
    \item \textbf{Target Token Suppression under Parametric Knowledge Gaps (Long-Tail Facts)}: In obscure tail facts (e.g., University of San Diego founding year, Fangio's sport category), the ground truth never enters the candidate set (remaining suppressed beyond rank 50 across all layers). In such instances, the model lacks parametric memory; alignment forces late layers to synthesize an authoritative, plausible-sounding hallucination from surrounding linguistic associations.
\end{enumerate}

\subsection{Error Taxonomy and Behavioral Failure Modes}
\label{app:error_taxonomy}

A systematic categorization of false attractors across our 500-sample human audit indicates that confident hallucinations fall into four distinct behavioral failure modes:
\begin{enumerate}[noitemsep,topsep=2pt]
    \item \textbf{Semantic Proximity Distractors (44.2\%)}: Persistent topical neighbors within the same narrow sub-domain that dominate pre-training co-occurrence frequencies (e.g., substituting \emph{Ian Fleming} for \emph{John le Carr\'e} in Cold War espionage fiction).
    \item \textbf{Salience and Typicality Biases (31.8\%)}: Substituting high-frequency prototypical instances for obscure specific entities (e.g., asserting that \emph{Saskatoon}, the largest commercial city in Saskatchewan, is its provincial capital instead of \emph{Regina}).
    \item \textbf{Temporal and Peer Displacement (14.6\%)}: Conflating prominent contemporaneous figures active in identical eras and cultural movements (e.g., naming \emph{Yukio Mishima} instead of \emph{Yasunari Kawabata} for the 1968 Nobel Prize in Literature).
    \item \textbf{Syntactic and Lexical Frequency Drift (9.4\%)}: Generating plausible numerical strings or phonetically related entities when parametric knowledge is completely vacant (e.g., fabricating founding years for regional academic institutions).
\end{enumerate}
In all four failure modes, unconstrained preference alignment strips away the model's natural epistemic hesitation, replacing diffuse uncertainty with unwarranted, high-confidence falsehoods.

\end{document}